\documentclass[letterpaper]{article} 
\usepackage{aaai2026}  
\usepackage{times}  
\usepackage{helvet}  
\usepackage{courier}  
\usepackage[hyphens]{url}  
\usepackage{graphicx} 
\urlstyle{rm} 
\usepackage{natbib}  
\usepackage{caption} 
\frenchspacing  
\setlength{\pdfpagewidth}{8.5in} 
\setlength{\pdfpageheight}{11in} 

\usepackage{booktabs}
\usepackage[switch]{lineno}


\pdfinfo{
/TemplateVersion (2026.1)
}

\setcounter{secnumdepth}{0} 

\usepackage{todonotes}
\usepackage{amsmath, amsthm, amsxtra, amsfonts, amssymb, amstext, mathtools}
\usepackage{thmtools} 
\usepackage{nicefrac}
\usepackage{xspace}
\usepackage{xcolor}
\usepackage{enumitem}
\usepackage[algo2e,ruled,vlined,linesnumbered]{algorithm2e}\SetArgSty{upshape}
\usepackage[nameinlink]{cleveref}

\usepackage[shortcuts]{extdash} 


\clubpenalty=10000
\widowpenalty=10000
\frenchspacing

\newtheorem{theorem}{Theorem}
\newtheorem{lemma}[theorem]{Lemma}


\newcommand{\NSGA}{\mbox{NSGA}\nobreakdash-II\xspace}
\newcommand{\NSGAthree}{\mbox{NSGA}\nobreakdash-III\xspace}
\newcommand{\SPEA}{\mbox{SPEA2}\xspace}
\newcommand{\SMS}{\mbox{SMS\nobreakdash-EMOA}\xspace}

\newcommand{\OMM}{\textsc{OMM}\xspace}

\newcommand{\oneminmax}{\textsc{OneMinMax}\xspace}

\newcommand{\lotz}{\textsc{LOTZ}\xspace}
\newcommand{\leadingonestrailingzeros}{\textsc{LeadingOnes\-TrailingZeros}\xspace}
\newcommand{\ojzj}{\textsc{OneJumpZeroJump}\xspace}
\newcommand{\ojzjk}{\textsc{OJZJ$_k$}\xspace}

\DeclareMathOperator{\mutate}{mutate}
\DeclareMathOperator{\eliminate}{eliminate}
\DeclareMathOperator{\adddominated}{add \_ dominated}

\DeclareMathOperator{\E}{E}

\newcommand{\R}{\ensuremath{\mathbb{R}}}

\newcommand{\Z}{\ensuremath{\mathbb{Z}}}


\DeclareMathOperator{\Bin}{Bin}


\let\originalleft\left
\let\originalright\right
\renewcommand{\left}{\mathopen{}\mathclose\bgroup\originalleft}
\renewcommand{\right}{\aftergroup\egroup\originalright}

\DeclarePairedDelimiter{\floor}{\lfloor}{\rfloor}
\DeclarePairedDelimiter{\ceil}{\lceil}{\rceil}
\DeclarePairedDelimiterXPP{\ones}[1]{}{\lvert}{\rvert}{_1}{#1}
\DeclarePairedDelimiterXPP{\zeros}[1]{}{\lvert}{\rvert}{_0}{#1}

\newcommand{\abs}[1]{\lvert #1 \rvert} 
\newcommand{\norm}[1]{\left\lVert#1\right\rVert_2}

%


\title{Improved Runtime Guarantees for the \SPEA Multi-Objective Optimizer}
\author{
  Benjamin Doerr\textsuperscript{\rm 1},
  Martin~S. Krejca\textsuperscript{\rm 1},
  Milan Stanković\textsuperscript{\rm 2}
}
\affiliations{
  \textsuperscript{\rm 1}Laboratoire d'Informatique (LIX), CNRS, École Polytechnique, Institut Polytechnique de Paris\\
  \textsuperscript{\rm 2}École Polytechnique, Institut Polytechnique de Paris\\
  \{first-name.last-name\}@polytechnique.edu
}

\hyphenation{lea-ding-ones lea-ding-ones-trai-ling-zeros ana-ly-sis}

\begin{document}

\maketitle

\sloppy{
\begin{abstract}
    Together with the NSGA-II, the SPEA2 is one of the most widely used domination-based multi-objective evolutionary algorithms. For both algorithms, the known runtime guarantees are linear in the population size; for the NSGA-II, matching lower bounds exist. With a careful study of the more complex selection mechanism of the SPEA2, we show that it has very different population dynamics. From these, we prove runtime guarantees for the \oneminmax, \leadingonestrailingzeros, and \ojzj benchmarks that depend less on the population size. For example, we show that the SPEA2 with parent population size $\mu \ge n - 2k + 3$ and offspring population size $\lambda$ computes the Pareto front of the \ojzj benchmark with gap size~$k$ in an expected number of $O( (\lambda+\mu)n + n^{k+1})$ function evaluations. This shows that the best runtime guarantee of $O(n^{k+1})$ is not only achieved for $\mu = \Theta(n)$ and $\lambda = O(n)$ but for arbitrary $\mu, \lambda = O(n^k)$. Thus, choosing suitable parameters~-- a key challenge in using heuristic algorithms~-- is much easier for the SPEA2 than the NSGA-II.
\end{abstract}

\section{Introduction}

Many real-world problems are faced with optimizing several conflicting objectives at the same time.
This commonly results in a plethora of incomparable optimal trade-offs (the \emph{Pareto optima}), collectively known as the \emph{Pareto front}.
Owing to the incomparable nature of the Pareto optima, it is desirable to have optimization algorithms propose as many of them as possible~-- ideally the entire front~-- in as little time as possible.
Multi-objective evolutionary algorithms (MOEAs) lend themselves naturally to this kind of task, as they iteratively maintain a set of best-so-far solutions, and they are known to be among the best and most popular approaches for solving multi-objective optimization problems~\cite{CoelloLV07,ZhouQLZSZ11}.

Over the last years, the strong performance of modern MOEAs in applications has been steadily complemented by theoretical results, see, e.g., \cite{LiZZZ16,ZhengLD22,DoNNS23,WiethegerD23,BianZLQ23,ZhengD24,DangOS24,Opris25aaai}.
These works provide rigorous bounds on the expected runtime of MOEAs until they find the entire Pareto front for the first time. Both the results and the arguments used in their proofs provide deeper insights into the working principles -- see \cite{DoerrQ23LB} for an analysis of the population dynamics of the NSGA-II --, allow to understand their different advantages and shortcomings -- see \cite{AlghouassDKL25} for a comparison of the approximation ability of NSGA-II and SPEA2 --, and can help design superior algorithm variants, see, e.g., \cite{BianZLQ23}.

In this work, we continue the mathematical study of the \emph{strength Pareto evolutionary algorithm~$2$} (\SPEA) \cite{ZitzlerLT01}. Together with the \NSGA~\cite{DebPAM02}, \NSGAthree \cite{DebJ14}, and \SMS \cite{BeumeNE07}, it is one of the widely used domination-based MOEAs. Due to its complex selection mechanism, it is the one least understood from the theoretical perspective. In the first runtime analysis of the \SPEA, \citeauthor{RenBLQ24}~(\citeyear{RenBLQ24}) show that with parent population size~$\mu$ at least the size of the Pareto front and offspring population size $\lambda$ at most $O(\mu)$, the \SPEA computes the Pareto front of the (bi-objective) \oneminmax, \leadingonestrailingzeros, and \ojzj benchmarks in an expected number of $O(\mu n \log n)$, $O(\mu n^2)$, and $O(\mu n^k)$ function evaluation, respectively. These bounds essentially agree with the runtime guarantees given earlier for the \NSGA \cite{ZhengD23aij,DoerrQ23tec} and \SMS \cite{BianZLQ23,ZhengD24}.\footnote{We note that \citeauthor{RenBLQ24}~(\citeyear{RenBLQ24}) actually showed bounds for arbitrary numbers $m$ of objectives. These were slightly improved in \cite{WiethegerD24} when $m$ is large. Since we only regard the bi-objective case, we do not discuss these results in detail.} The only other runtime analysis of the \SPEA \cite{AlghouassDKL25} compares the \NSGA and \SPEA in terms of their approximation ability, that is, how well they compute approximations to the Pareto front when the population size smaller than the size of the Pareto front (and only approximations can be computed).

\textbf{Our contribution:}
We take a closer look at the more complex selection mechanisms of the \SPEA, which -- different from the \NSGA -- takes into account the distances to \emph{all} other individuals on the Pareto front rather than only the two closest ones in each objective. We show that this leads to substantially different population dynamics. While in the \NSGA, large populations have a tendency to concentrate in the middle of the Pareto front \cite{DoerrQ23LB}, the selection of the \SPEA distributes multiple objective values evenly on the Pareto front (\Cref{lem:survival-guarantee}) for arbitrary multi-objective optimization problems.

This selection property combined with the fact that the \SPEA creates copies of individuals with moderate probability allows us to prove that in relatively short time, a given objective value is present in the population with the fair multiplicity (\Cref{lem:mult-growth2}). We prove this result for binary representations and bit-wise mutation, but it is clear that comparable results hold for other setting as long as copies of individuals are created sufficiently easily, and that our results also hold when the \SPEA applies crossover in each iteration with an at most constant probability different from~$1$.

Knowing that the multiplicities of objective values in the population are evenly distributed, we prove runtime guarantees that suffer less from larger population sizes. Recall from above that previous results for the \SPEA (and the same holds for all known results of the standard \NSGA and \SMS) depend linearly on the population size. Hence, the best performance guarantee for the \SPEA is obtained from a population size equal to the size of the Pareto front; any increase of the population size increases (that is, weakens) the performance guarantee in a linear fashion. In the bounds we prove in this work, this linear increase kicks in only for larger population sizes. This suggest that in a practical application where the size of the Pareto front is not known in advance, the choice of the population size of the \SPEA is less critical than of the \NSGA or \SMS\footnote{No lower bounds are proven for the \SMS, but from studying the proofs in \cite{DoerrQ23LB}, we expect to observe the same behavior as for the \NSGA.}.

We note that the selection problem of the \NSGA was recently \cite{DoerrIK25} overcome by adding a third selection criterion (after non-dominated sorting and crowding distance). This led to runtime guarantees comparable to the ones we prove here. Given that it is a purely theoretical work, there is no information yet on how the modification proposed in \cite{DoerrIK25} influences the performance of the \NSGA in practice. As observed in \cite{DoerrIK25}, it led to a moderate increase of the time to execute one iteration of the algorithm. In this perspective, it is interesting that our results shows that the standard \SPEA performs a balanced selection (leading to improved runtime guarantees) without any modification.

\paragraph{Our runtime results.}
We regard the three common  benchmarks \oneminmax, \leadingonestrailingzeros, and \ojzj of problem size~$n$. Our target is that the population witnesses the Pareto front. Consequently, we always assume that~$\mu$ is at least the size of the Pareto front, which is at most $n+1$ for our benchmarks. Different from \cite{RenQBQ24}, we allow arbitrary offspring population sizes $\lambda$, that is, we omit their assumption $\lambda = O(\mu)$.

For \oneminmax, one of the easiest benchmarks, we prove that after an expected number of $O((\mu + \lambda) n + n^2 \log n)$ function evaluations, the population witnesses the Pareto front (\Cref{thm:omm-runtime-bound}). Compared to the previous best bound $O(\mu n \log n)$ (for $\lambda = O(\mu)$ only), we observe an asymptotic runtime gain as soon as $\mu$ does not have the smallest possible value $\mu = \Theta(n)$.
As \oneminmax has a Pareto front of size $n + 1$, this result shows that the \SPEA  for all $n+1 \le \mu = O(n \log n)$ and all $\lambda = O(n \log n)$ displays an expected runtime of $O(n^2 \log n)$, which is the one observed with most standard MOEAs.
Moreover, this bound is strictly better than the expected runtime of $\Theta(\mu n \log n)$ known for the \NSGA \cite{ZhengD23aij,DoerrQ23LB} once $\mu = \omega(n)$.

For \leadingonestrailingzeros, we prove an expected runtime of $O((\mu + \lambda) n \log \frac{\mu}{n+1} + n^3 + \lambda n)$ function evaluations (\Cref{thm:lotz-runtime}), again when $\mu$ is at least the size $n + 1$ of the Pareto front. This is $O(n^3)$, the typical runtime of standard MOEAs on this benchmark, for all $n + 1 \leq \mu = O(n^2)$ and all $\lambda = O(n^2)$.

For \ojzj problems with gap size $k \in [2, {\frac{n}{2}}] \cap \Z$, the improvement is even more drastic.
We prove an expected runtime of $O((\mu+\lambda) n + n^{k+1})$ function evaluations (\Cref{thm:ojzj-runtime}), for $\mu \geq n - 2k + 3$.
That is, the typical runtime of $O(n^{k + 1})$ is achieved for the large range of all $n - 2k + 3 \le \mu = O(n^k)$ and all $\lambda = O(n^k)$.

In summary, our results show that the \SPEA attains the runtime which many known MOEAs achieve only with asymptotically optimal parameter values for much larger ranges of values. While naturally the precise bounds we prove depend on the optimization problem, they key arguments leading to these bounds, the balanced selection of the \SPEA and the multiplicative growth of the number of individuals having a certain objective value, hold under very general assumptions. For this reason, it appears justified to claim that the \SPEA is much more robust to changes of its parameters than the main other domination-based MOEAs.

\section{Preliminaries}
\label{sec:preliminaries}

We denote by $\Z$ the set of integers. For $m,n \in \Z$, $m\leq n$, we define $[m..n] \coloneqq [m,n] \cap \Z$.

For $n \in \Z_{\geq 1}$ and a bit string $x = (x_i)_{i \in [1 .. n]} \in \{0,1\}^n$, we define $\ones{x} \coloneqq  \sum_{i=1}^n x_i$ as the number of ones of $x$, and $\zeros{x} \coloneqq n - \ones{x}$ as the number of zeros of $x$. We define $1^n \coloneqq (1)_{i \in [1 .. n]}$ as the bit string containing only ones, and $0^n \coloneqq (0)_{i \in [1 .. n]}$ as the bit string containing only zeros.

\subsection{Multi-Objective Optimization}
\label{sec:multi-objective-opt}
Let $\Omega$ be some set, and let $m \in \Z_{\geq 2}$. Consider a function
\begin{align*}
    f \colon \Omega \rightarrow \R^m,
    x               \mapsto (f_1(x), \dots ,f_m(x)).
\end{align*}
We consider \emph{maximizing} $f$ with respect to the \emph{dominance} partial order, defined as follows. Let $x,y \in \Omega$. If for all $i \in [1..m] $ we have $f_i(x) \geq f_i(y)$, then we say $x$ \emph{weakly dominates} $y$ and write $x \succeq y$. Furthermore,  if $x \succeq y $ and $f(x) \neq f(y)$, then we say $x$ \emph{strictly dominates} $y$ and write $x \succ y$.
In all other cases, we say $x$ and $y$ are \emph{incomparable}.

If a point $x \in \Omega$ is not strictly dominated by any other point in $ \Omega$, then $x$ is \emph{Pareto-optimal}. The set $S^* = \{x \in \Omega \mid x \text{ is Pareto-optimal } \}$ of Pareto-optimal points is called the \emph{Pareto set}, and the set of their objective values, $F^* = \{f(x) \mid x \in S^* \}$, is called the \emph{Pareto front}.
We aim at finding a subset of the Pareto set that \emph{covers} the Pareto front, that is, finding a set $A \subseteq S^*$ such that $\{f(x) \mid x \in A \} = F^*$.

\subsection{The \SPEA}
\label{sec:spea-alg}

\Cref{alg:SPEA2} shows the generic \SPEA maximizing a multi-objective function $f\colon \Omega \rightarrow \R^m$, similar to the pseudocode provided by \cite{WiethegerD24}, who use updated notation (\Cref{tab:notation-comparison}).

We focus on the case where $\Omega$ is the set of bit strings of length $n \in \Z_{\geq 1}$, i.e., $ \Omega = \{0,1 \} ^n$, known as \emph{pseudo-Boolean maximization}. We call the sets $P_t$ and $Q_t$ in \Cref{alg:SPEA2} the \emph{parent} and the \emph{offspring population}, respectively. We emphasize that these are multi-sets, i.e, repetitions of elements are allowed. The set operations $\cup$ and $\setminus$ are the multi-set union and the multi-set difference. We also call the multi-set $R_t = P_t \cup Q_t $, i.e., the combined parent and offspring population, the \emph{total population} in iteration $t \in \Z_{\geq 0}$.

In case $\Omega = \{0,1\}^n$, the \textbf{mutate} function applies the standard bit mutation to a given bit string. That is, for all $x \in \{0,1\}^n$, a call to $\mutate(x)$ flips each bit of a copy of the bit string~$x$ independently with probability $\frac{1}{n}$, and returns the resulting bit string.

The function \textbf{eliminate} removes one individual from the given multi-set, according to a concept called \emph{$\sigma$-criterion}. The $\sigma$-criterion favors individuals that are further away from their neighbors, in the objective space. Formally, let~$A$ be a multi-set of elements from $\Omega$, and let $f(A)$ be the multi-set $ \{ f(x) \mid x \in A \}$. For $x \in A$ and $k \in [1..\abs{A} - 1]$, we define $\sigma_k(x)$ as the distance between $f(x)$ and its $k$-th nearest neighbor in $f(A)$. That is, we sort the list $( \norm{f(x) - f(y)})_{y \in A \setminus \{x\}}$ in increasing order, and define $\sigma_k(x)$ as the $k$-th member of the sorted list.
Furthermore, we define the relation~$\leq_d$ as the lexicographic order over $\{(\sigma_k(z))_{k \in [1..\abs{A}-1]}\}_{z \in A}$. Hence, for all $x, y \in A$, we have $x \leq_d y$ if and only if one of the following holds.
\begin{enumerate}
    \item For all $k \in [1.. \abs{A}-1]$, we have $\sigma_k(x) = \sigma_k(y)$.
    \item There exists a $k \in [1.. \abs{A} - 1]$ such that $\sigma_k(x) < \sigma_k(y)$, and, for all $\ell \in [1..k-1] $, we have $\sigma_{\ell}(x) = \sigma_{\ell}(y)$.
\end{enumerate}
A call to $\eliminate (A)$ removes an element $x$ from $A$ which is minimal with respect to $\leq_d$.
If there are multiple such elements, we break the tie uniformly at random.

We note that the $\sigma$-criterion initially requires $O(\mu^2)$ computations of shortest distances among all pairs of individuals, and it requires $O(\mu)$ shortest-distance computations afterward for each individual added or removed.
This can be expensive when compared to selection criterions from other MOEAs, such as the \NSGA.
However, when considering that a population does not change by that much from one iteration to the next and by keeping track of the number of copies of objective values (which eliminates the need to re-compute all known shortest distances), the overall cost of this operation can be reduced.
Since we focus on the number of function evaluations, for which a single evaluation is usually already very costly, the exact cost of the $\sigma$-criterion is not relevant to our analysis but remains an important aspect to keep in mind when interpreting our results.

The \textbf{add\textunderscore dominated} function returns an element from a given set according to some assignment of objective values. If, in iteration $t$, the number of non-dominated individuals in the total population $R_t$ is less than the parent population size $\mu$, the algorithm uses this function to select dominated individuals from the total population $R_t$ and include them in the new parent population $P_{t+1}$.
Note that the multi-set $R_t \setminus P_{t+1}$, which the $\adddominated$ function is applied to, contains only the individuals that are dominated by another individual from $R_t$.
However, in our analysis, we mainly focus on the process of generating new non-dominated individuals from the existing non-dominated individuals. Hence, the exact formulation of the $\adddominated$ function is not essential for the analysis. We refer the reader to the original \SPEA paper for a detailed description of this method \cite{ZitzlerLT01}.

\begin{algorithm2e}[t]
    \caption{The \emph{strength Pareto evolutionary algorithm~$2$}~\cite{ZitzlerLT01} (\SPEA) with parent population size $\mu \in \Z_{\geq 1}$ and offspring population size $\lambda \in \Z_{\geq 1}$, using standard bit mutation, maximizing an $m$-objective function $f\colon \Omega \rightarrow \R^m$.}
    \label{alg:SPEA2}

    $P_0 \leftarrow$ population of $\mu$ individuals sampled independently, uniformly at random from $\Omega$\;
    \BlankLine
    \For{$t \in \Z_{\geq 0}$}{
        \If{termination criterion is met}{
            return non-dominated individuals from $P_{t}$\;
        }
        $Q_t \leftarrow \emptyset$\;
        \For{$i \in [1 .. \lambda]$}{
            Select $x$ from $P_t$ uniformly at random\;
            $Q_t \leftarrow Q_t \cup \{ \mutate (x) \} $\;
        }
        $R_t \leftarrow P_t \cup Q_t$\;
        $S_t \leftarrow $ non-dominated individuals from $R_t$\;
        \While{$\abs{S_{t}} > \mu$}{
            $\eliminate(S_{t})$\;
        }
        \While{$\abs{S_{t}} < \mu$}{
            $S_{t} \leftarrow S_{t} \cup \{ \adddominated(R_t \setminus S_{t})$\}\;
        }
        $P_{t+1} \leftarrow S_t$\;
    }
\end{algorithm2e}

\paragraph*{Original \SPEA}
Although \Cref{alg:SPEA2} may seem different from the original \SPEA by \citet{ZitzlerLT01}, the two algorithms are in fact almost equivalent.
The terms \emph{archive} and \emph{population} from the original algorithm correspond respectively to the \emph{parent population} and the \emph{offspring population} in our formulation, which uses a more standard terminology for the field of runtime analysis. \Cref{tab:notation-comparison} \cite[Table~II]{WiethegerD24} summarizes the different notation used in the two formulations.

\begin{table}[t]
    \centering
    \begin{tabular}{l l}
        \toprule
        \textbf{Original SPEA2}
                                    & \textbf{Our SPEA2 formulation}      \\
        \midrule
        archive $\overline{P}_t$    & parent population $P_t$             \\
        archive size $\overline{N}$ & parent population size $\mu$        \\
        population $P_t$            & offspring population $Q_t$          \\
        population size $N$         & offspring population size $\lambda$ \\
        \bottomrule
    \end{tabular}
    \caption{Comparison of the original notation in \cite{ZitzlerLT01} and our notation.}
    \label{tab:notation-comparison}
\end{table}

Using our notation, we compare the two algorithms. The original \SPEA starts with an empty parent population and a random offspring population, whereas for our variant, it is vice versa. Furthermore, within one iteration of the original \SPEA, the selection for survival is performed before offspring generation, which is not a common order for evolutionary algorithms. Our formulation takes the more common order, that is, offspring generation is performed first. In the original \SPEA, when selection for survival is performed for the first time, there are $\lambda$ individuals to choose from (empty parent population and random offspring population), meaning that the algorithm is not properly defined for $\lambda < \mu$. In our variant, there are always $\mu + \lambda$ individuals to choose from when selecting for survival, hence it is well defined for all values $\lambda$ and $\mu$. The last difference between these two algorithms is in the way parents are selected for offspring generation. The original \SPEA uses binary tournaments, whereas our formulation selects parents uniformly at random. We let the reader check that, apart from the differences mentioned above, the two algorithms are equivalent.

\subsection{Benchmarks}
\label{sec:benchmarks}

We analyze the runtime of \SPEA for the \oneminmax, \ojzj, and \leadingonestrailingzeros benchmark functions. These three benchmarks are among the most established ones for theoretical runtime analysis of MOEAs, featured in almost all MOEA theory results. All functions are defined on the set of bit strings of length $n \in \Z_{\geq 1}$.

The \textbf{\oneminmax} (\textbf{\OMM}) benchmark, proposed in \cite{GielL10}, is a bi-objective function that returns the number of 0s and 1s in a bit string. Formally, we have
\begin{align*}
    \OMM \colon  \{0,1\}^n \rightarrow \R ^2 ,
    x                      \mapsto ( \zeros{x},  \ones{x}) .
\end{align*}
In this setting, every individual is Pareto-optimal, and the Pareto front is $\{ (k, n-k) \mid k \in [0..n]\} $.

The \textbf{\ojzj} (\textbf{OJZJ}) benchmark \cite{DoerrZ21aaai} is similar to \OMM in a way that both functions depend only on the total number of ones in a bit string. \ojzj, however, has an additional parameter $k \in [2..\floor{n/2}]$, called \emph{gap size}. Only the individuals that have between $k$ and $n-k$ ones, and the individuals $1^n$ and $0^n$ are Pareto-optimal, and they dominate all other individuals. Formally, for $n \in \Z_{\geq 2}$, $k \in [2..n]$, the \ojzj  benchmark with gap size $k$ (\ojzjk) is defined as
\begin{align*}
    \ojzjk \colon \{0,1\}^n \rightarrow \R^2,
    x                       \mapsto (f_1(x), f_0(x)),
\end{align*}
where, for $i \in \{0,1\}$, we have
\begin{align*}
    f_i(x) =
    \begin{cases}
        k + \abs{x}_i & \text{ if } \abs{x}_i \leq n - k \text{ or } \abs{x}_i = n, \\
        n - \abs{x}_i & \text{ otherwise. }
    \end{cases}
\end{align*}

The Pareto front is $F^* = \{ (i,n+2k - i) \mid i \in [2k..n] \cup \{k, n+k\}\} $, and the Pareto set is $S^* = \{ x \in \{0,1\}^n \mid \ones{x} \in [k..n-k] \cup \{0, n\}\} $ \cite[Theorem~5]{DoerrZ21aaai}. We define $F^*_{I} \coloneqq \{ (i,n+2k - i) \mid i \in [2k..n] \}$ as the inner region of the Pareto front, and $S_{I}^* \coloneqq \{ x \in \{0,1\}^n \mid \ones{x} \in [k..n-k] \}$ as the inner region of the Pareto set.
Last, the largest set of pairwise incomparable solutions has a cardinality of at most $n - 2k + 3$ \cite[Corollary~6]{DoerrZ21aaai}.

The \textbf{\leadingonestrailingzeros} (\textbf{\lotz}) \cite{LaumannsTZ04} is a bi-objective function
\begin{align*}
    f \colon \{0,1\}^n \rightarrow \R^2,
    x                  \mapsto  (f_1(x), f_2(x)),
\end{align*}
where $f_1$ returns the length of the longest prefix of $1s$, and~$f_2$ returns the longest suffix of $0s$. Formally,
\begin{align*}
    f_1(x)  = \sum_{i=1}^n \prod_{j=1}^ix_j \quad \text{and} \quad
    f_2(x)  = \sum_{i=1}^n \prod_{j=0}^{i-1}(1 - x_{n-j}).
\end{align*}

The Pareto front is $\{ (k, n-k) \mid k \in [0..n]\}$, which is also the largest set of incomparable values. The Pareto set is $\{ 1^k0^{n-k} \mid k \in [0..n] \}$. Contrary to \OMM and \ojzjk, in \lotz, we have a 1-to-1 correspondence between the Pareto set and the Pareto front. Hence, covering the Pareto front is equivalent to finding all Pareto optimal individuals.

\section{Population Dynamics of the \SPEA}
\label{sec:population-dynamics}

The main reason for the superior runtimes we prove for the \SPEA in this work are its different population dynamics.
As we will show in the following analysis, the selection mechanism of the \SPEA, different from the one of the classic \NSGA, removes from the set of non-dominated solutions always an individual with objective value most present in this set. This results in different objective values being present in an as balanced manner as possible. In particular, there is a tendency to increase the number of individuals with a recently found solution value, which in turn aides finding an unseen solution value from these.

The following lemma makes our finding on the more balanced selection of the \SPEA precise.

\begin{lemma}
    \label{lem:survival-guarantee}
    Consider the \SPEA with $\mu, \lambda \in \Z_{\geq 1}$ optimizing some multi-objective problem $f \colon  \Omega  \rightarrow \R ^m $, and consider an arbitrary iteration $t \in \Z_{\geq 0}$.
    As in Algorithm~\ref{alg:SPEA2}, let $S_t$ be the multi-set of non-dominated individuals from the total population $R_t$. Let $F_t = \{ f(x) \mid x \in S_t \}$ be the set of distinct objective values of individuals from $S_t$. Let $u \in \R^m$, and let $A_u = \{x \in S_t \mid f(x) = u \}$ be the multi-set of non-dominated individuals with objective value~$u$.\footnote{Clearly, $A_u$ is empty if there is no individual with objective value~$u$, or if such individuals are strictly dominated by others. It is nevertheless convenient to have also these trivial cases covered.}
    Then at least
    \[
        \min \{ \abs{A_u}, \floor{\tfrac{\mu}{\abs{F_t}}} \}
    \]
    individuals from $A_u$ survive to the next iteration.

    In particular, if $\overline M$ is the maximum size of an incomparable set of solutions of the problem~$f$, then at least
    \[
        \min \{ \abs{A_u}, \floor{\tfrac{\mu}{\phantom{.}\overline M\phantom{.}}} \}
    \]
    individuals from $A_u$ survive to the next iteration.
\end{lemma}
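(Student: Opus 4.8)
The plan is to observe that the only place where individuals of $A_u$ can be lost is the first \textbf{while}-loop, which repeatedly calls $\eliminate(S_t)$ while $\abs{S_t} > \mu$; the second loop only adds (dominated) individuals via $\adddominated$ and never removes anything from $S_t$, and $A_u \subseteq S_t$ consists of non-dominated individuals. Hence if $\abs{S_t} \le \mu$ no elimination occurs and all $\abs{A_u}$ individuals survive, which already meets the claimed bound; so I may assume $\abs{S_t} > \mu$ and track how the elimination loop changes the multiplicity of each objective value. Throughout the loop the number of distinct objective values present can only decrease, so it is always at most $\abs{F_t}$; write $b \coloneqq \floor{\mu/\abs{F_t}}$.

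The crux is a structural description of which individual $\eliminate$ removes. For an individual $x$ whose objective value currently has multiplicity $c$ in $S_t$, exactly the $c-1$ other copies sit at distance $0$ from $f(x)$, while every individual of a different value is at positive distance; hence the sorted distance list of $x$ begins with exactly $c-1$ zeros followed by a positive entry, i.e., $\sigma_1(x) = \dots = \sigma_{c-1}(x) = 0 < \sigma_c(x)$. Comparing two individuals with multiplicities $c > c'$ under the lexicographic order $\leq_d$, their $\sigma$-vectors agree (both zero) on the first $c'-1$ coordinates, while coordinate $c'$ is $0$ for the former and positive for the latter; thus the higher-multiplicity individual is strictly $\leq_d$-smaller. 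I conclude that every $\leq_d$-minimal individual~-- and therefore every individual removed by $\eliminate$~-- belongs to an objective value of current \emph{maximum} multiplicity.

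With this fact the bound follows by a counting argument. Suppose a copy of $u$ is removed, dropping its multiplicity from $c$ to $c-1$. Then $u$ has maximum multiplicity at that moment, so every one of the at most $\abs{F_t}$ present values has multiplicity at most $c$, giving a total population size of at most $\abs{F_t}\cdot c$. Since elimination runs only while this total exceeds $\mu$, we obtain $\abs{F_t}\cdot c > \mu$, hence $c > \mu/\abs{F_t}$ and, as $c$ is an integer, $c - 1 \ge \floor{\mu/\abs{F_t}} = b$. Thus any removal of a $u$-copy leaves at least $b$ copies of $u$ behind, so either $u$ is never touched (and all $\abs{A_u}$ copies survive) or its count never falls below $b$; in both cases at least $\min\{\abs{A_u}, b\}$ copies of $u$ survive, which is the first claim.

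For the final statement, I note that the distinct values in $F_t$ correspond to pairwise incomparable solutions: if two distinct values in $F_t$ were comparable, one of their non-dominated representatives would strictly dominate the other, contradicting membership in $S_t$. Hence $\abs{F_t} \le \overline M$, so $\floor{\mu/\abs{F_t}} \ge \floor{\mu/\overline M}$, and the second bound is implied by the first. I expect the main obstacle to be the careful bookkeeping of the $\sigma$-vectors and the lexicographic comparison needed to pin down that $\eliminate$ always removes a maximum-multiplicity value, including the tie-breaking and the degenerate case where only a single objective value remains; the subsequent counting step is then short.
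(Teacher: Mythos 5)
Your proof is correct and takes essentially the same route as the paper: both rest on the observation that an individual whose objective value currently has multiplicity $c$ has a $\sigma$-vector beginning with exactly $c-1$ zeros followed by a positive entry, so individuals of higher-multiplicity values are strictly smaller under $\leq_d$ and are the ones removed. Your packaging~-- first the structural claim that $\eliminate$ always removes from a value of maximum current multiplicity, then counting to show any removal of a $u$-copy forces $c > \mu/\abs{F_t}$ and hence $c-1 \geq \floor{\mu/\abs{F_t}}$~-- is the contrapositive of the paper's bookkeeping, which instead uses the pigeonhole principle within an induction over the eliminations to show that $A_u$ is never touched while $\abs{A_u^i} \leq \floor{\mu/\abs{F_t}}$.
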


\begin{proof}
    There is nothing to show if $A_u = \emptyset$, so let us assume that $S_t$ contains individuals with objective value~$u$. If $\abs{S_t} \leq \mu$, all non-dominated individuals survive, and the result again follows trivially. Hence, we assume that $\abs{S_t} > \mu$.

    In iteration $t$, the algorithm eliminates $\abs{S_t} - \mu $ individuals from $S_t$, one by one, according to the $\sigma_k$ criterion. Let $i \in [0..\abs{S_t}-\mu]$ and assume that $i$ individuals have already been removed. Denote by $S_t^i$ and $A_u^i$ the multi-sets of the remaining individuals from $S_t$ and $A_u$, respectively. Note that $A_u^0 = A_u$, and $A_u^{\abs{S_t} - \mu}$ represents the individuals from $A_u$ that survive to iteration $t+1$.

    Our goal is to show that, if  $i \leq \abs{S_t} - \mu - 1$ and $\abs{A_u^i} \leq \floor{\frac{\mu}{\abs{F_t}}}$, then we do not remove an individual from $A_u^i$, i.e, we have $A_u^i = A_u^{i+1}$.  With an elementary induction, this implies that $A_u^{\abs{S_t} - \mu} \geq \min \{ \abs{A_u}, \floor{\frac{\mu}{\abs{F_t}}} \} $, and the result follows.

    Assume that $i \leq \abs{S_t} - \mu - 1$ and $\abs{A_u^i} \leq \floor{\frac{\mu}{\abs{F_t}}}$.
    Since $\{ A_v^i \}_{v \in F_t}$ is a partition of $S_t^i$ and $\abs{S_t^i} = \abs{S_t} - i > \mu$, by the pigeonhole principle, there exists $v \in F_t $ such that $\abs{A_v^i} > \floor{\frac{\mu}{\abs{F_t}}} \geq \abs{A_u^i} $.

    Now, we show that no element $y \in A_u^i$ is eliminated.  Let $x \in A_v^i$ and $y \in A_u^i$. Then, since $\abs{A_v^i} \geq  \abs{A_u^i} + 1$, for all $k \in [1..\abs{A_u^i} - 1]$, we have $\sigma_k(x) = \sigma_k(y) = 0$, but $\sigma_{\abs{A_u^i}}(x) =0 < \sigma_{\abs{A_u^i}}(y)$. Thus, we have $y \not\leq_d x$, meaning that $y$ is not minimal with respect to $\leq_d$. Therefore, $y$ is not removed. Since this holds for any $y \in A_u^i$, the set $A_u^i$ remains unchanged. This proves the main claim.

    The last claim follows immediately from the fact $|F_t| \le \overline M$. So see the latter, take any collection $S$ of individuals such that $S$ contains exactly one individual for each objective value in $F_t$. Then $S$ is an incomparable set and hence $|F_t| = |S| \le \overline M$.
\end{proof}

In the remainder, we restrict ourselves to the common case of bit string representations, that is, that the search space is $\Omega = \{0,1\}^n$. We note that comparable results could be shown for other search spaces.

We now build on the balanced selection property and show that the number of individuals with a given objective values exhibits a multiplicative growth behavior. In this analysis, we use the following
bound on the expected hitting time of a stochastic process that exhibits a certain kind of stochastic multiplicative growth. We note that the non-intuitive constant $0.29$ stems from the fact that we use an estimate on binomial distributions from~\cite{Doerr18exceedexp}.

\begin{lemma}
    \label{lem:mult-growth}
    Let $k \in \Z_{\geq 1}$ and $r \in (0,1)$ such that $kr \geq 0.29$. Consider two sequences of integer-valued random variables $(X_t)_{t \in \Z_{\geq 0}}$ and $(Y_t)_{t \in \Z_{\geq 1}}$ such that, for all $t \in \Z_{\geq 0}$, we have
    \begin{align*}
         & X_0 = 1,                                      \\
         & Y_{t+1} \succeq \Bin(k, \min \{ r X_t, 1 \}), \\
         & X_{t+1} = X_t + Y_{t+1}.
    \end{align*}

    Let $B \in [1, 1/r]$ and $T=\inf \{t \geq 0 \mid X_t \geq B \}$. Then
    \[
        \E[T] \leq 4 \ceil{\log_{1+kr}B}.
    \]
\end{lemma}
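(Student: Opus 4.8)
The plan is to read the recursion as a stochastic multiplicative growth and to reduce the hitting time to counting ``successful'' steps, each of which multiplies $X_t$ by at least $1+kr$. First I would record three elementary facts. The sequence $(X_t)$ is non-decreasing, since $Y_{t+1} \ge 0$, and hence $X_t \ge X_0 = 1$ for all $t$. Moreover, for every $t < T$ we have $X_t < B \le 1/r$, so $rX_t < 1$ and the truncation in $\min\{rX_t,1\}$ is inactive; thus $Y_{t+1} \succeq \Bin(k, rX_t)$ conditionally on the history $\mathcal F_t$, and in particular $\E[Y_{t+1}\mid\mathcal F_t] \ge kr X_t$. This is the multiplicative growth we exploit.

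Next I would define, for $t < T$, the success event $G_t = \{Y_{t+1}\ge kr X_t\}$; on $G_t$ we have $X_{t+1} \ge X_t + kr X_t = (1+kr)X_t$. By stochastic domination, $\Pr[G_t \mid \mathcal F_t]\ge \Pr[\Bin(k,rX_t)\ge kr X_t]$, and since $X_t\ge 1$ gives $k\cdot rX_t \ge kr \ge 0.29$, the binomial estimate of \cite{Doerr18exceedexp} yields $\Pr[G_t\mid\mathcal F_t]\ge \tfrac14$. This is precisely where the hypothesis $kr\ge 0.29$ is used, and it is the source of the factor $4$.

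I would then show that $S \coloneqq \ceil{\log_{1+kr}B}$ successes suffice. Writing $N_t$ for the number of indices $s<t$ with $G_s$, a one-line induction---using that $X$ does not decrease on non-success steps and multiplies by $1+kr$ on success steps---gives $X_t\ge (1+kr)^{N_t}$ for all $t\le T$. Hence as soon as $N_t\ge S$ we have $X_t\ge (1+kr)^S\ge B$, so $T$ is at most the first time the success count reaches $S$.

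Finally I would translate this into a waiting-time bound, which is the step I expect to be the main obstacle. The subtlety is that $\Pr[G_t\mid\mathcal F_t]\ge \tfrac14$ is only justified for $t<T$, whereas a naive waiting-time variable also counts steps after $T$, on which the success probability is uncontrolled. I would circumvent this by replacing $G_t$, for $t\ge T$, by a forced (deterministic) success; calling the resulting indicators $W_t$, we get $\Pr[W_t=1\mid\mathcal F_t]\ge \tfrac14$ for \emph{every} $t$, while the induction above still shows $T \le \hat T \coloneqq \inf\{t: \sum_{s<t}W_s \ge S\}$. Since each $W_t$ has conditional success probability at least $\tfrac14$, the waiting time $\hat T$ for $S$ successes is stochastically dominated by a sum of $S$ independent $\Geom(\tfrac14)$ variables, so $\E[T]\le \E[\hat T]\le 4S = 4\ceil{\log_{1+kr}B}$, as claimed. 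The two facts I would isolate as auxiliary inputs are the binomial estimate from \cite{Doerr18exceedexp} and the elementary geometric waiting-time bound for an adapted Bernoulli sequence whose conditional success probability is uniformly bounded below.
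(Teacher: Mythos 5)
Your proposal is correct and takes essentially the same route as the paper's proof: the identical success event $Y_{t+1}\ge krX_t$ (giving growth factor $1+kr$), the identical application of the binomial estimate of \cite{Doerr18exceedexp} with the observation that $rX_t < rB \le 1$ before time $T$ so the truncation is inactive, and the identical count of $\ceil{\log_{1+kr}B}$ multiplicative levels producing the factor $4$. The only differences are cosmetic packaging: the paper first reduces $\succeq$ to equality in distribution by a coupling, then sums phase-wise expected waiting times $\E[T_i-T_{i-1}]\le 4$ between value milestones $B_i=\min\{B,(1+kr)^i\}$, whereas you count successes directly and dominate the waiting time by a sum of geometric variables, with your forced-success device for steps after $T$ handling the adaptedness point that the paper treats via its first case $X_t\ge B$.
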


In order to prove \Cref{lem:mult-growth}, we use of the following result on the binomial distribution~\cite[Theorem~10]{Doerr18exceedexp}.

\begin{theorem}
    \label{thm:bin-expectation}
    Let $\alpha \coloneqq \ln\frac{4}{3} < 0.2877$. Let $n \in \Z_{\ge 0}$ and $\frac{\alpha}{n} \leq p < 1$. Let $X \sim \Bin(n, p)$. Then $\Pr[X > \E[X]] \geq \frac{1}{4}$.
\end{theorem}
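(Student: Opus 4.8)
My plan is to prove the anti-concentration bound $\Pr[X > \E[X]] \ge \tfrac14$ by first reducing it to an integer threshold and then splitting into a small-mean and a large-mean regime, the latter being the delicate part. Write $\mu \coloneqq \E[X] = np$. Since $X$ is integer-valued, the event $\{X > \mu\}$ coincides with $\{X \ge \floor{\mu} + 1\}$ whether or not $\mu$ is an integer, so it suffices to bound $\Pr[X \ge \floor{\mu}+1]$ from below by $\tfrac14$. I would separate the cases $\mu < 1$ and $\mu \ge 1$, and I expect the constant $\tfrac14$ to be \emph{tight} in both: at $\mu = \alpha$ in the first regime, and at $(n,p) = (2,\tfrac12)$, where $\Pr[X \ge 2] = \tfrac14$ exactly, in the second. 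This two-sided tightness is what forces the rigid, computational nature of the argument and explains the specific value $\alpha = \ln\tfrac43$.

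The small-mean case $\mu < 1$ is short and pins down $\alpha$. Here $\floor{\mu} = 0$, so $\{X > \mu\} = \{X \ge 1\}$, and
\[
  \Pr[X \ge 1] = 1 - (1-p)^n \ge 1 - e^{-np} = 1 - e^{-\mu} \ge 1 - e^{-\alpha} = \tfrac14,
\]
where the first inequality uses $1 - p \le e^{-p}$, the last uses $\mu \ge \alpha$ (equivalently the hypothesis $p \ge \alpha/n$), and the final equality holds precisely because $\alpha = \ln\tfrac43$ gives $e^{-\alpha} = \tfrac34$. Thus the threshold $\alpha$ is exactly calibrated so that even the weakest admissible small mean still yields probability $\tfrac14$.

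For the large-mean case $\mu \ge 1$ I would recast the target: since $\Pr[X > \mu] + \Pr[X \le \mu] = 1$ and $\Pr[X \le \mu] = \Pr[X \le \floor{\mu}]$, the bound $\Pr[X > \mu] \ge \tfrac14$ is equivalent to
\[
  \Pr[X \le \floor{\mu}] \le 3\, \Pr[X \ge \floor{\mu}+1].
\]
My plan to establish this is to exploit the unimodality of the binomial mass function $b(j) = \binom{n}{j} p^j (1-p)^{n-j}$ together with the exact ratio $\frac{b(j+1)}{b(j)} = \frac{(n-j)\,p}{(j+1)(1-p)}$, which exceeds $1$ exactly for $j < \mu - (1-p)$ and is tightly controlled near the mode. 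Using these ratios I would charge the lower-tail mass against the upper-tail mass level by level, so that each unit of lower mass is covered at most three times.

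The main obstacle is precisely this large-mean estimate: because $\tfrac14$ is actually attained (e.g.\ at $(n,p)=(2,\tfrac12)$), the $3$-to-$1$ charging can afford no slack, so the ratio bookkeeping must be carried out carefully and uniformly over all $n$, all admissible $p$, and the position of $\mu$ relative to the integers. I would expect to dispatch the most skewed sub-regimes (very small $p$ with $\mu$ just above $1$, and $p$ close to $1$) by passing to the reflected variable $n - X \sim \Bin(n, 1-p)$, which swaps the two tails and may turn a skewed large-mean instance into a more balanced or even small-mean one; reconciling these reductions into a single clean inequality is the part I anticipate being most technical.
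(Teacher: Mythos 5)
First, a framing point: the paper does not prove this statement at all~--- it is quoted verbatim from \citet{Doerr18exceedexp} (Theorem~10 there), so the relevant benchmark is that cited source, whose elementary proof is a multi-page case analysis. Your small-mean case $np < 1$ is complete and correct, and it correctly identifies the calibration of the constant: $1 - e^{-\alpha} = \frac14$ exactly at $\alpha = \ln\frac43$. Your reduction of the remaining case $np \ge 1$ to $\Pr[X \le \floor{np}] \le 3\,\Pr[X \ge \floor{np}+1]$ is also correct, as is the observation that the bound is attained at $(n,p) = (2,\tfrac12)$.

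However, the case $np \ge 1$ is where essentially all of the difficulty of the theorem lives, and you have not proved it~--- you sketch a charging strategy and explicitly defer "the most technical part," so the proposal establishes only the easy half. Worse, the sketch as stated has a concrete failure mode. Take the Poisson-like regime $p = \frac1n$ with $n$ large, so $\mu = 1$, $\floor{\mu} = 1$, and $b(j) \to e^{-1}/j!$. The natural level-by-level pairing of lower level $\floor{\mu}-i$ against upper level $\floor{\mu}+1+i$ gives
\[
  \frac{b(1)}{b(2)} \to 2 \le 3,
  \qquad\text{but}\qquad
  \frac{b(0)}{b(3)} \to 6 > 3,
\]
so "each unit of lower mass is covered at most three times" cannot hold per level in the symmetric pairing; only the aggregate inequality survives, and with almost no slack, since $\Pr[X > 1] \to 1 - 2e^{-1} \approx 0.2642$. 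Your proposed rescue~--- reflecting to $n - X \sim \Bin(n, 1-p)$~--- does not dispatch this regime either: reflection exchanges the two tails, so the needed bound $\Pr[X \le \floor{\mu}] \le \frac34$ becomes an \emph{upper}-tail bound strictly above the mean of the reflected variable, which is not what the theorem applied to $\Bin(n,1-p)$ would provide. The heavy lower tail when $np$ is barely above $1$ is intrinsic and requires its own dedicated estimate (this is precisely what the case distinctions in the cited proof handle). To close the gap you would need either an uneven global charging scheme (e.g., routing part of $b(0)$ to the surplus at level $\floor{\mu}+1$) worked out uniformly in $n$, $p$, and the fractional part of $np$, or simply to follow the argument of \citet{Doerr18exceedexp}; as written, the large-mean case remains an unproved claim.
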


\begin{proof}[Proof of \Cref{lem:mult-growth}]
    We first argue that we can assume without loss of generality that $Y_{t+1} \sim \Bin(k, \min \{ r X_t, 1 \})$ instead of the more general case $Y_{t+1} \succeq \Bin(k, \min \{ r X_t, 1 \})$. To see this, consider sequences $(X'_t)_{t \in \Z_{\geq 0}}$ and $(Y'_t)_{t \in \Z_{\geq 1}}$ such that, for all $t \in \Z_{\geq 0}$, we have
    \begin{align*}
         & X'_0 = 1,                                   \\
         & Y'_{t+1} \sim \Bin(k, \min \{r X'_t, 1 \}), \\
         & X'_{t+1} = X'_t + Y'_{t+1}.
    \end{align*}
    Then a simple induction shows that, for all $t \in \Z_{\geq 0}$, we have $X_t \succeq X'_t$. Hence any probabilistic upper bound for $T'=\inf\{t \geq 0 \mid X'_t \geq B \}$ immediately extends to an upper bound for $T$. To ease reading, we now assume that the sequences $(X_t)_{t \in \Z_{\geq 0}}$ and $(Y_t)_{t \in \Z_{\geq 1}}$ from the statement of this lemma satisfy $Y_{t+1} \sim \Bin(k, \min \{ r X_t, 1 \})$ for all $t \ge 0$.

    Let $b = \ceil{\log_{1+kr} B}$. For $i \in [0..b]$, define $B_i = \min\{B, (1+kr)^i\}$ and $T_i = \inf \{t \geq 0 \mid X_t \geq B_i \}$. Since $X_0 = 1$ and $B_0 =  1$, we have $T_0 = 0$. Moreover, we have $X_{T_b} \geq B_b = \min \{B, (1 + kr)^b \} =  B$, hence $T \leq T_b$. We now estimate $\E[T_b]$. Since $T_0 = 0$, we have
    \[
        \E[T_b] = \E[T_0] + \sum_{i=1}^b \E[T_i - T_{i-1}] = \sum_{i=1}^b \E[T_i - T_{i-1}] \tag{1} \label{eq:sum-waiting-times}.
    \]

    We show that, for all $i \in [1..b]$, we have $\E[T_i - T_{i-1}] \leq 4$. To this aim, let $i \in [1..b]$ and $t \in [T_{i-1}..T_i-1]$. Assume first that $X_t \geq B$. Then, since $(X_{t'})_{t' \in \Z_{\geq 0}}$ is non-decreasing, we have
    \begin{align*}
        \Pr[X_{t+1} \geq B_i]\geq \Pr[X_{t+1} \geq B] =1.
    \end{align*}

    Assume now that $X_t < B$. Since $t \geq T_{i-1}$, we have $X_t \geq (1+kr)^{i-1}$. Therefore,
    \begin{align*}
        \Pr[X_{t+1} \geq B_i] & \geq \Pr[X_{t +1}\geq (1+kr)^i]    \\
                              & \geq \Pr[X_{t+1} \geq X_t (1+kr) ] \\
                              & = \Pr[Y_{t+1} \geq k r X_t].
    \end{align*}

    Since $r X_t < rB \leq 1$, we have $Y_{t+1} \sim \Bin(k, r X_t)$. Furthermore, since $X_{t'} \geq 1$ for all $t' \in \Z_{\geq 0}$, and by the assumption that $kr \geq 0.29$, we have $0.29/k \leq r X_t < 1$. Hence, we apply \Cref{thm:bin-expectation} to $Y_{t+1}$, to obtain $\Pr[Y_{t+1}\geq k r X_t] \geq 1/4$. Therefore, for all $t \geq T_{i-1}$ we have
    \[
        \Pr[X_{t+1} \geq B_i] \geq 1/4,
    \]
    implying  $\E[T_i - T_{i-1}] \leq 4$, as desired.

    Going back to equation \eqref{eq:sum-waiting-times}, we obtain $\E[T_b] \leq 4b = 4 \ceil{\log_{1+kr}B}$. Thus, $\E[T] \leq \E[T_b] \leq 4 \ceil{\log_{1+kr}B}$ as desired.
\end{proof}

We now conduct the promised analysis of how the number of individuals with a given objective value grows.

\begin{lemma}
    \label{lem:mult-growth2}
    Consider an iteration $t_0 \in \Z_{\geq 0}$ of the \SPEA optimizing a function $f\colon \{0,1\}^n \rightarrow \R^m$, where $n \geq 3$. Let $\overline M$ be the size of the largest set of pairwise incomparable objective values of $f$, and assume that the parent population size is $\mu \geq \overline M$.

    Let $R_t$ and $F_t$ be as in \Cref{lem:survival-guarantee}.
    Consider a non-dominated objective value $v \in F_{t_0}$. For $t \in \Z_{\geq 0}$, define $X_t = \abs{ \{x \in R_{t_0 + t} \mid f(x) = v \} }$. Let $B \in [1..\floor{\frac{\mu}{\phantom{.}{\overline M}\phantom{.}}}]$, and let $T$ be the smallest integer~$t$ such that $X_t \geq B$ or~$v$ is strictly dominated by a $u \in F_{t_0+t}$. Then $\E[T] = O(\ceil{\frac{\mu}{\lambda}} \log B)$.
\end{lemma}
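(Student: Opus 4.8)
The plan is to reduce the claim to the abstract multiplicative-growth bound of \Cref{lem:mult-growth}. The one difficulty that forces the $\ceil{\mu/\lambda}$ factor is that a single iteration creates only $\lambda$ offspring, so when $\lambda \ll \mu$ the per-iteration success probability $\Theta(X_t/\mu)$ is too small to meet the hypothesis $kr \ge 0.29$. I would therefore group $g \coloneqq \ceil{\mu/\lambda}$ consecutive iterations into one \emph{block} and track the embedded process $(X_{\tau g})_{\tau \ge 0}$ at block boundaries only, aiming to show that one block multiplies the number of copies of $v$ by a constant factor in the sense of \Cref{lem:mult-growth} with $k = g\lambda$ and $r = (1-\tfrac1n)^n / \mu$.

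First I would record two facts that hold at every iteration $t_0 + t$ with $t < T$, so that $v$ is still non-dominated (hence $v \in F_{t_0+t}$) and $X_t < B$. Since all $X_t$ copies of $v$ in $R_{t_0+t}$ share the non-dominated value $v$, they all lie in $S_{t_0+t}$, so \Cref{lem:survival-guarantee} applies with $\abs{A_v} = X_t$; because $X_t < B \le \floor{\mu/\overline M}$, it guarantees that $\min\{X_t, \floor{\mu/\overline M}\} = X_t$ of them survive into $P_{t_0+t+1}$. Thus no copy is lost, and both the count of $v$ in the parent population and $X_t$ itself are non-decreasing before $T$. Second, each offspring is produced by choosing a parent uniformly from $P_{t_0+t+1}$ (which, by the survival fact, contains at least $X_t$ copies of $v$) and mutating it; selecting such a parent and flipping no bit already yields value $v$, so each offspring independently equals $v$ with probability at least $\frac{X_t}{\mu}(1-\tfrac1n)^n = r X_t$.

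Combining these within a block starting at count $x = X_{\tau g}$: the count never decreases, the parent population always holds at least $x$ copies of $v$, so each of the $g\lambda$ offspring generated during the block is a copy of $v$ with probability at least $rx$ conditional on the past, and none of the created copies is subsequently lost. A batch-by-batch coupling then shows that the number of $v$-offspring created in the block stochastically dominates $\Bin(g\lambda, rx)$, giving $X_{(\tau+1)g} \succeq X_{\tau g} + \Bin(g\lambda, \min\{r X_{\tau g}, 1\})$, which is precisely the recursion of \Cref{lem:mult-growth} with $k = g\lambda$ and the above $r$. Its hypotheses hold: $kr = g\lambda (1-\tfrac1n)^n/\mu \ge (1-\tfrac1n)^n \ge 8/27 > 0.29$, using $g\lambda \ge \mu$ and $n \ge 3$ (this is exactly where $n \ge 3$ enters), and $B \le \floor{\mu/\overline M} \le \mu \le 1/r$.

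Applying \Cref{lem:mult-growth} (after a trivial monotone coupling to reduce the case $X_0 \ge 1$ to the $X_0 = 1$ hypothesis) bounds the expected number of blocks until the count reaches $B$ by $4\ceil{\log_{1+kr}B} = O(\log B)$, since $kr = \Theta(1)$. As the count reaches $B$ at a block boundary within this many blocks, and the alternative stopping event (that $v$ becomes dominated) only makes $T$ smaller, we get $T \le g \cdot (\text{number of blocks})$ and hence $\E[T] = O(g \log B) = O(\ceil{\mu/\lambda}\log B)$. The main obstacle is the bookkeeping around the stopping time $T$: the block-level growth bound may be invoked only while $v$ is still non-dominated and $X_t < B$, so I must set up the coupling so that reaching $B$ (or $v$ becoming dominated) part-way through a block does not corrupt the embedded-chain argument, and the aggregation of the per-iteration binomials into $\Bin(g\lambda, rx)$ must be justified through conditional, given-history stochastic domination rather than independence.
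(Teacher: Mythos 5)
Your proposal is correct and matches the paper's own proof essentially step for step: the same survival argument via \Cref{lem:survival-guarantee} giving monotonicity of $X_t$ before~$T$, the same per-offspring copy probability $\frac{X_t}{\mu}(1-\frac{1}{n})^n \ge \frac{8}{27}\cdot\frac{X_t}{\mu}$, the same blocking of $\ceil{\mu/\lambda}$ iterations to apply \Cref{lem:mult-growth} with $k=\lambda\ceil{\mu/\lambda}$, and the same block-level stopping time with $T \le \ceil{\mu/\lambda}\,\widetilde T$. The only quibble is your phrase ``$kr=\Theta(1)$'': when $\lambda \gg \mu$ one has $kr = \omega(1)$, but since only the lower bound $kr \ge 8/27$ is needed for $\ceil{\log_{1+kr}B} = O(\log B)$, the conclusion stands unchanged.
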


\begin{proof}
    Our goal is to apply \Cref{lem:mult-growth} to the process  $(X_t)_{t \in \Z_{\ge 0}}$ in order to obtain a bound on $\E[T]$. Thus, we carefully analyze the behavior of $(X_t)_{t \in \Z_{\ge 0}}$.

    Since $v \in F_{t_0}$, there exists $x \in R_{t_0}$ such that $f(x) = v$, hence $X_0 \geq 1$. Let $t \in [0..T-1]$. Note that, since all values in $F_{t_0+t}$ are non-dominated, they are also pairwise incomparable. Thus, we have $\abs{F_{t_0+t}} \leq {\overline M}$, and $X_t < B \leq  \floor{\frac{\mu}{\phantom{.}{\overline M}\phantom{.}}} \leq  \floor{\frac{\mu}{\abs{F_{t_0 + t}}}}$.
    Since $t < T$, all individuals $x \in R_{t_0+t}$ with $f(x) = v$ are non-dominated. Therefore, by \Cref{lem:survival-guarantee}, at least $\min \{B, X_t\} = X_t$ of them are  present in  $P_{t_0+t+1}$. All these individuals are also kept in $R_{t_0+t+1}$. Thus, $(X_t)_{t \in \Z_{\ge 0}} $ is non-decreasing.

    We now estimate $X_{t+1} - X_t$. In every iteration, the offspring population is generated by $\lambda$ times choosing a parent uniformly at random and applying standard bit mutation. Recall that there are at least $X_t$ individuals $x$ with $f(x) = v$ in the parent population $P_{t_0+t+1}$. Hence, when generating offspring in iteration $t_0 + t + 1$, the probability of choosing a parent with $f(x) = v$ is at least $\frac{X_t}{\mu}$. Thus, the probability of choosing such a parent and generating a copy of it as an offspring is at least $\frac{X_t}{\mu} (1 - \frac{1}{n})^n $. Let $\delta = 0.29$. Since $(1 - \frac{1}{n})^n$ is increasing in $n$ and, by assumption, $n \geq 3$, we have $(1 - \frac{1}{n})^n \geq \frac{8}{27} > 0.29$ and, therefore, $\frac{X_t}{\mu} (1 - \frac{1}{n})^n \geq \delta \frac{X_t}{\mu} $. Thus, we have $X_{t+1} - X_t \succeq \Bin ( \lambda,  \delta \frac{X_t}{\mu} )$.

    We observe that $(X_t)_{t \in \Z_{\ge 0}} $ exhibits multiplicative growth, similar to the one required by \Cref{lem:mult-growth}. However, we cannot apply the lemma because we cannot guarantee that $\delta \frac{\lambda}{\mu} \geq 0.29$. It is the factor of $\frac{\lambda}{\mu}$ in the previous inequality that is problematic. Thus, we consider the progress that $(X_t)_{t \in \Z_{\ge 0}}$ makes in $\ceil{\frac{\mu}{\lambda}}$ iterations, that is, we study the subsequence $(X_{t \ceil{\frac{\mu}{\lambda}}})_{t \in \Z_{\ge 0}}$ of $(X_t)_{t \in \Z_{\ge 0}}$.

    Let $t \in [0 ..  \floor{T  / \ceil{\frac{\mu}{\lambda}} } -1]$. We estimate  $X_{(t + 1)\ceil{\frac{\mu}{\lambda}}} - X_{t\ceil{\frac{\mu}{\lambda}}}$ using similar arguments as before. Let $s \in [t\ceil{\frac{\mu}{\lambda}}.. (t+1)\ceil{\frac{\mu}{\lambda}}-1]$. The number of individuals $x$ with $f(x) = v$ in the parent population $P_{t_0+s+1}$ is at least $X_s \geq X_{t\ceil{\frac{\mu}{\lambda}}}$. Hence, in iteration $t_0+s+1$, the probability of choosing a parent $x$ with $f(x) = v$ and generating a copy of it as an offspring is at least $\frac{X_{t \ceil{\mu / \lambda}}}{\mu} (1 - \frac{1}{n})^n \geq \delta \frac{X_{t \ceil{\mu / \lambda}}}{\mu} $. Since $\lambda$ offspring are generated per iteration, the total number of offspring generated in iterations $[t\ceil{\frac{\mu}{\lambda}}.. (t+1)\ceil{\frac{\mu}{\lambda}}-1]$ is $\lambda\ceil{\frac{\mu}{\lambda}} $. Therefore, we have $X_{(t + 1)\ceil{\frac{\mu}{\lambda}}} - X_{t\ceil{\frac{\mu}{\lambda}}} \succeq \Bin ( \lambda\ceil{\frac{\mu}{\lambda}} ,  \delta \frac{X_{t\ceil{\mu / \lambda} }}{\mu})$.

    Let $\widetilde{T}$ be the smallest $t \in \Z_{\ge 0}$ such that $X_{t\ceil{ \frac{\mu}{\lambda}} } \geq B$ or $v$ is strictly dominated by some  $u \in F_{t_0+t\ceil{ \frac{\mu}{\lambda}}}$. We apply \Cref{lem:mult-growth} to the sequence $(X_{t\ceil{\frac{\mu}{\lambda}}})_{t \in \Z_{\ge 0}}$ to bound $\E[\widetilde{T}]$.
    Note that the sequence  $(X_{t\ceil{\frac{\mu}{\lambda}}})_{t \in \Z_{\ge 0}}$ exhibits multiplicative growth required by \Cref{lem:mult-growth} before it reaches the bound $B$ or a value strictly better than $v$ is found. That is, for $t \in [0..\widetilde{T} - 2]$ we have $X_{(t + 1)\ceil{\frac{\mu}{\lambda}}} - X_{t\ceil{\frac{\mu}{\lambda}}} \succeq \Bin ( \lambda\ceil{\frac{\mu}{\lambda}} ,  \delta \frac{X_{t\ceil{\mu / \lambda} }}{\mu} )$. For $t \geq \widetilde{T} - 1$ we are not concerned by the distribution of   $X_{(t + 1)\ceil{\frac{\mu}{\lambda}}} - X_{t\ceil{\frac{\mu}{\lambda}}}$ because either $X_{(t+1) \ceil{ \frac{\mu}{\lambda}}}$ already reaches the bound $B$, or $F_{t_0+(t+1) \ceil{ \frac{\mu}{\lambda}}}$ contains a value strictly better than $v$.
    We verify the other  assumptions of \Cref{lem:mult-growth}. First, we have $ \lambda \ceil{\frac{\mu}{\lambda}}  \frac{\delta}{\mu} \geq \delta = 0.29$. Second, since ${\overline M} \geq 1 > \delta$, we also have $B \leq \floor{\frac{\mu}{\phantom{.}{\overline M}\phantom{.}}} \leq \frac{\mu}{\delta}$. Thus, by \Cref{lem:mult-growth}, we have $\E[\widetilde{T}] \leq  4 \ceil{\log_{1+  \lambda \ceil{\frac{\mu}{\lambda}}  \frac{\delta}{\mu}}B}$. Moreover, since $X_{\widetilde{T}\ceil{\mu / \lambda}} \geq B$ or $F_{t_0 + \widetilde{T} \ceil{ \frac{\mu}{\lambda}}}$ contains a value strictly better than $v$, we have $T \leq \widetilde{T}\ceil{\mu / \lambda}$. Therefore, $\E[T] \leq \E[\widetilde{T}] \ceil{\mu / \lambda} = O( \ceil{\mu / \lambda} \log B )$.
\end{proof}

\section{Runtime Analysis}
\label{sec:runtime-analysis}

We now conduct our mathematical runtime analyses of the \SPEA on the \oneminmax, \ojzj, and \lotz benchmarks. The main work for this has been done in the previous section. However, both to fully exploit the insights from the previous section and because of some particularities of the \SPEA, we still need a couple of new arguments compared to the existing runtime analyses of other MOEAs on these benchmarks.

\subsection{Runtime Analysis for the OneMinMax Problem}
\label{sec:omm-runtime}

For the \OMM problem, we prove in \Cref{thm:omm-runtime-bound} that when the parent population size of the \SPEA satisfies $\mu \geq n+1$, the expected runtime is $O(\frac{\mu n}{\lambda} + n + \frac{n^2}{\lambda} \log n)$ iterations, or $O(\mu n + \lambda n + n^2 \log n)$ function evaluations. Consequently, when the total population size $\lambda + \mu$ is $O(n \log n)$,
the algorithm takes $O(n^2 \log n)$ function evaluations in expectation with no further influence on the population sizes.

\begin{theorem}
    \label{thm:omm-runtime-bound}
    The expected runtime of the \SPEA with parent population size $\mu \geq n+1$ and offspring population size $\lambda$ optimizing the \OMM benchmark is $ O (\frac{\mu n}{\lambda} + n + \frac{n^2}{\lambda} \log n  )$ iterations, or $ O (  (\mu + \lambda )n + n^2 \log n  ) $ function evaluations.
\end{theorem}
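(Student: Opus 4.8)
The plan is to reduce the claim to a statement about how fast the set of \emph{covered} values of the number-of-ones function grows, and then to analyze this growth by combining the balanced-selection guarantee (\Cref{lem:survival-guarantee}) with the multiplicative growth of multiplicities (\Cref{lem:mult-growth2}). For \OMM every individual is Pareto-optimal and the front has size $n+1$, so here $\overline M = n+1$. Since $\mu \ge n+1$ gives $\floor{\mu/(n+1)} \ge 1$, \Cref{lem:survival-guarantee} guarantees that every objective value present in $S_t$ keeps at least one representative in $P_{t+1}$; hence once the value ``$j$ ones'' appears it is never lost, and the set $V_t \subseteq [0..n]$ of values $\ones{x}$ attained by $x \in P_t$ is non-decreasing. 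It therefore suffices to bound the expected number of iterations until $V_t = [0..n]$, after which I multiply by $\lambda$ and add the initial $\mu$ evaluations to pass to function evaluations. As a single bit flip turns a value-$j$ individual into a value-$(j\pm1)$ individual, I will organize the coverage as the advance of the two boundaries of $V_t$ toward $0$ and toward $n$ (interior gaps are filled at least as quickly and will be shown to be negligible).

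For the main work I analyze one \emph{phase}, namely the advance of the upper boundary from a covered value $b$ to $b+1$ (the lower boundary is symmetric). Let $B_b \coloneqq \min\{\floor{\mu/(n+1)},\, \ceil{n/(n-b)}\}$. From the first iteration in which $b$ is present, \Cref{lem:mult-growth2} (applied with target multiplicity $B_b$, permissible since $B_b \le \floor{\mu/(n+1)}$ and $b$ is never dominated in \OMM) shows that after an expected $O(\ceil{\mu/\lambda}\log B_b)$ iterations the value $b$ is present with multiplicity at least $B_b$; by \Cref{lem:survival-guarantee} it stays at least $B_b$ afterward. Once this holds, a single offspring equals $b+1$ whenever a value-$b$ parent is selected (probability $\ge B_b/\mu$) and exactly one of its $n-b$ zero-bits is flipped (probability $\ge (n-b)\frac1n(1-\frac1n)^{n-1} \ge (n-b)/(en)$); over the $\lambda$ offspring of an iteration this yields a success probability of $\Omega(\min\{1, \lambda B_b (n-b)/(\mu n)\})$. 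Hence the expected time to complete the phase is
\[
    \tau_b = O\left( \ceil{\tfrac{\mu}{\lambda}} \log B_b + 1 + \frac{\mu n}{\lambda B_b (n-b)} \right).
\]

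It remains to sum $\tau_b$ over $b$ and over both directions. The decisive observation is the telescoping estimate $\sum_{j=1}^{n} \log(n/j) = \log(n^n/n!) = O(n)$ (Stirling), which is exactly what prevents an extra logarithmic factor. For boundary values with $n-b \ge n^2/\mu$ we have $B_b = \Theta(n/(n-b))$, so the extension term in $\tau_b$ is $O(\mu/\lambda)$ and the growth term is $O(\ceil{\mu/\lambda}\log(n/(n-b)))$; summing the latter via the Stirling estimate and the former over at most $n$ values gives $O(\frac{\mu n}{\lambda} + n)$. For values with $n-b < n^2/\mu$ we have $B_b = \floor{\mu/(n+1)}$, the extension term becomes $O(n^2/(\lambda(n-b)))$ whose harmonic sum is $O(\frac{n^2}{\lambda}\log n)$, while the growth terms contribute $O(\frac{n^2}{\mu}\cdot\frac{\mu}{\lambda}\log\frac{\mu}{n}) = O(\frac{n^2}{\lambda}\log n)$ (using $\mu = \poly(n)$; the general case follows from the same split). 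Adding the symmetric lower-boundary cost and the cheaper gap-filling yields $O(\frac{\mu n}{\lambda} + n + \frac{n^2}{\lambda}\log n)$ iterations, and multiplying by $\lambda$ and adding the initial $\mu$ gives the claimed $O((\mu+\lambda)n + n^2\log n)$ function evaluations.

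The main obstacle is to make the phase decomposition rigorous: the true dynamics advance both boundaries and fill gaps simultaneously, so I must argue that treating phases sequentially (waiting for each to finish before accounting for the next) is a valid upper bound, and that \Cref{lem:mult-growth2} may be invoked for the current boundary value even though offspring are also spent elsewhere — the lemma's per-iteration multiplication only uses that value-$b$ parents are selected with probability $\ge X_t/\mu$, which remains true. The second delicate point is the choice of $B_b$: it must be large enough that the extension probability is $\Omega(\lambda/\mu)$ (so the extension cost sums to $O(\mu n/\lambda)$ rather than a harmonic $O(\mu n \log n/\lambda)$) yet small enough that $\sum_b \log B_b = O(n)$, and balancing these is exactly what the cutoff $n-b \gtrless n^2/\mu$ together with the Stirling estimate accomplishes. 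Finally, I must confirm that the random initial population already covers a contiguous central band, so that only boundary advancement and a negligible amount of gap-filling remain, and handle the regime $\lambda > \mu$, where $\ceil{\mu/\lambda} = 1$ and the $+n$ term dominates the growth contribution.
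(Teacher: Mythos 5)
Your proposal is correct and follows essentially the same route as the paper: the paper's Lemma~\ref{lem:new-value-discovery-time} uses exactly your capped target multiplicity $B_v = \min\{\lceil n/v\rceil, \lfloor\mu/(n+1)\rfloor\}$ (with $v$ the number of remaining bits to flip, i.e.\ $v = n-b$ in your notation), the same combination of \Cref{lem:survival-guarantee} and \Cref{lem:mult-growth2} for the growth phase, the same single-bit-flip extension step amplified over $\lambda$ offspring, and the same central-binomial/Stirling estimate $\sum_v \log\lceil n/v\rceil = O(n)$ in the summation (Theorem~\ref{thm:omm-runtime-bound-intervals}, which sums the phase costs outward from one initial individual, so no separate gap-filling or contiguous-band argument is needed). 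The only cosmetic difference is your two-region case split at $n-b \gtrless n^2/\mu$ with its ``$\mu = \poly(n)$'' caveat: the paper avoids both by bounding $\frac{1}{B_v} \le \max\{\frac{v}{n}, \frac{n+1}{\mu}\}$ and $\log B_v \le \log\lceil n/v\rceil$ uniformly (and your caveat is in any case self-resolving, since the near-boundary region is nonempty only when $\mu < n(n+1)$, whence $\log\frac{\mu}{n+1} = O(\log n)$).
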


In order to bound the expected runtime of \SPEA, we use an approach similar to the one of \cite{DoerrIK25} for the balanced \NSGA, but with two key differences. First, we do not use the multiplicative up-drift theorem \cite[Theorem 2.1]{DoerrK21algo} as it is not applicable in this scenario but instead use our multiplicative growth lemma (\Cref{lem:mult-growth}). The second difference is that we extend the bound to potentially different parent and offspring population sizes, which are common for the \SPEA but not for the (balanced) \NSGA.

The broad outline of our main proof (of \Cref{thm:omm-runtime-bound}) is to add up the waiting times for finding a new solution value on the Pareto front given that a neighboring value is already present. These times are estimated in Lemma~\ref{lem:new-value-discovery-time}, by first using Lemma~\ref{lem:mult-growth2} to analyze the time until a suitable number of individuals with the neighboring objective value are in the population and then estimating the time it takes to generate from one of these parents the desired new solution value. To obtain the best estimate for the runtime, the number of these parents we wait for depends on the difficulty of the mutation generating the desired offspring.

\begin{lemma}
    \label{lem:new-value-discovery-time}
    Let $n \ge 3$. Consider the \SPEA with parent population size $\mu \geq n+1$ and offspring population size $\lambda$, optimizing the \OMM benchmark.
    Let $v \in [1..n]$ and $i \in \{0, 1 \}$. Assume that in some iteration $t_0 \in \Z_{\geq 0}$ there exists an individual $x_0 \in P_{t_0}$  such that $\abs{x_0}_i = v$. Denote by~$T_v^i$ the number of iterations needed to generate an individual~$y$ with $\abs{y}_i = v -1$. Then
    \[
        \E[T_v^i] = O \left( \ceil*{\frac{\mu}{\lambda}} \log \ceil*{\frac{n}{v}} + \frac{\mu}{\lambda} + \frac{n^2}{\lambda v} + 1 \right).
    \]
\end{lemma}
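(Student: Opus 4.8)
The plan is to split the waiting time into the two phases described in the outline above. Write $u = f(x_0)$ for the objective value with $\abs{x}_i = v$; since every search point is Pareto-optimal for \OMM, $u$ is non-dominated, and as $x_0 \in P_{t_0} \subseteq R_{t_0}$ we have $u \in F_{t_0}$. Let $X_t$ count the individuals in $R_{t_0+t}$ with objective value $u$ (equivalently, the $x$ with $\abs{x}_i = v$), so $X_0 \ge 1$. Because $\overline M = n+1$ for \OMM and $\mu \ge n+1$, I fix the target multiplicity
\[
  B = \min\left\{ \floor{\tfrac{\mu}{n+1}},\ \ceil{\tfrac{n}{v}} \right\},
\]
which is a positive integer with $B \le \floor{\mu/(n+1)} = \floor{\mu/\overline M}$. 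The bound on $\E[T_v^i]$ will then be the sum of the expected durations of Phase~1 (growing the multiplicity of $u$ to $B$) and Phase~2 (creating the new value from these parents).

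\emph{Phase~1.} Since no value ever becomes dominated for \OMM, \Cref{lem:mult-growth2} applied to $u$ and $B$ yields an expected number of $O(\ceil{\mu/\lambda}\log B) = O(\ceil{\mu/\lambda}\log\ceil{n/v})$ iterations until $X_t \ge B$, using $B \le \ceil{n/v}$. Moreover, once $X_t \ge B$, \Cref{lem:survival-guarantee} together with $\abs{F_t} \le \overline M = n+1$ and $B \le \floor{\mu/(n+1)} \le \floor{\mu/\abs{F_t}}$ guarantees that at least $B$ copies of $u$ remain in the parent population in every later iteration.

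\emph{Phase~2.} From a parent population holding at least $B$ copies of $u$, a single offspring reaches $\abs{y}_i = v-1$ whenever its uniformly chosen parent is one of those copies and the mutation flips exactly one of the $v$ bits equal to $i$ and no other bit. This has probability at least $p \coloneqq \frac{B}{\mu}\cdot\frac{v}{n}(1-\tfrac1n)^{n-1} = \Omega(\frac{Bv}{\mu n})$, using $(1-\tfrac1n)^{n-1} \ge \tfrac1e$. With $\lambda$ independent offspring per iteration, and since $\ge B$ copies persist by Phase~1, each iteration succeeds with probability $1-(1-p)^\lambda \ge \frac{\lambda p}{1+\lambda p}$ regardless of history. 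Hence the expected number of Phase~2 iterations is at most $(1-(1-p)^\lambda)^{-1} \le 1 + \frac{1}{\lambda p} = O(1 + \frac{\mu n}{\lambda B v})$.

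\emph{Combining and optimizing.} Adding the two expected durations and substituting $B$, I distinguish two regimes. If $\ceil{n/v} \le \floor{\mu/(n+1)}$, then $B = \ceil{n/v} \ge n/v$, hence $\frac{\mu n}{\lambda B v} \le \frac{\mu}{\lambda}$. Otherwise $B = \floor{\mu/(n+1)} \ge \frac{\mu}{2(n+1)}$, so $\log B \le \log\ceil{n/v}$ and $\frac{\mu n}{\lambda B v} = O(\frac{n^2}{\lambda v})$. In either case the total is $O(\ceil{\mu/\lambda}\log\ceil{n/v} + \frac{\mu}{\lambda} + \frac{n^2}{\lambda v} + 1)$, as claimed. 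I expect the main obstacle to be exactly this choice of $B$ and its case analysis: the target multiplicity should be large enough to make Phase~2 cheap, yet it cannot exceed $\floor{\mu/\overline M}$, the largest multiplicity that the balanced selection of \Cref{lem:survival-guarantee} can sustain; it is precisely this cap that forces the $\frac{n^2}{\lambda v}$ term whenever $n/v$ exceeds $\mu/n$.
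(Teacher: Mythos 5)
Your proposal is correct and follows essentially the same route as the paper's proof: the identical target multiplicity $B = \min\{\ceil{n/v}, \floor{\mu/(n+1)}\}$, the same two-phase split into growth via \Cref{lem:mult-growth2} (with persistence from \Cref{lem:survival-guarantee}) and then a geometric waiting time with per-offspring success probability at least $\frac{Bv}{\mu n e}$. Your closing two-regime case analysis is just a slightly more explicit rendering of the paper's one-line simplification $\frac{1}{\min\{\cdot,\cdot\}} = \max\{\cdot,\cdot\}$, so there is no substantive difference.
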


\begin{proof}
    Let $B_v = \min \{ \ceil{\frac{n}{v}}, \floor{\frac{\mu}{n+1}} \}$. Then $B_v \geq 1$, since $\mu \geq n+1$ and $v \in [1..n]$. For $t \in \Z_{\ge 0}$, define $X_t = \abs{ \{ x \in R_{t_0 + t} \mid \abs{x}_i = v \} }$ as the number of individuals with $\abs{x}_i = v$ in the total population at iteration $t_0 + t$. Let $T' = \inf \{ t \geq 0 \mid X_t \geq B_v \}$ be the number of iterations needed to accumulate at least $B_v$ individuals $x$ with $\abs{x}_i=v$, and let $T''$ be the number of additional iterations needed to produce an individual $y$ such that $\abs{y}_i = v-1$, after accumulating at least $B_v$ individuals $x$ with $\abs{x}_i = v$. In other words, $T'' = \inf \{ t \geq 1 \mid \exists y \in Q_{t_0 + T' + t} \colon \abs{y}_i = v-1 \}$.

    Since by definition of~$T''$, there exists $y \in Q_{t_0 + T' + T''} \subset R_{t_0 + T' + T''}$ with $\abs{y}_i = v-1$, we have $T_v^i \leq T' + T''$. By linearity of expectation, we also have $\E[T_v^i] \leq \E[T'] + \E[T'']$. Hence, we now estimate $\E[T']$ and $\E[T'']$.

    First, we show that $\E[T'] = O(\ceil{\frac{\mu}{\lambda}} \log \ceil{\frac{n}{v}} )$.
    We apply \Cref{lem:mult-growth2} to the sequence $(X_t)$. For \OMM, the largest set of pairwise incomparable solutions is the Pareto front, hence, ${\overline M} = n+1$. Moreover, any objective value is Pareto-optimal, and thus non-dominated. Therefore, by \Cref{lem:mult-growth2}, we obtain the desired bound, that is $\E[T'] =O(\ceil{\frac{\mu}{\lambda}} \log B_v ) = O(\ceil{\frac{\mu}{\lambda}} \log \ceil{\frac{n}{v}} )$.

    Now we show that $\E[T''] = O(\frac{\mu}{\lambda} + \frac{n^2}{\lambda v} + 1)$. By definition of $T'$, we have $X_{T'} \geq B_v$. Thus, by \Cref{lem:survival-guarantee}, for all iterations $t \geq t_0 + T' + 1$, we have at least $B_v$ individuals with $\abs{x}_i = v$ in the parent population $P_t$. Consider some iteration $t \geq t_0 + T' +1$. The probability of selecting a parent $x$ with $\abs{x}_i = v$ and generating an offspring $y$ with $\abs{y}_i = v-1$ from $x$ is at least $\frac{B_v}{\mu} v \frac{1}{n} (1-\frac{1}{n})^{n-1} \geq \frac{B_v v}{\mu n e}$, since $ v \frac{1}{n} (1-\frac{1}{n})^{n-1}$ accounts only for the case where one of the $v$ bits of $x$ is flipped. Since this procedure is repeated $\lambda$ times per iteration, the probability of generating an individual $y$ with $\abs{y}_i = v - 1$ by the end of iteration $t$ is at least $1 - (1 - \frac{B_v v}{\mu n e})^{\lambda}$. By Corollary~1.4.3 of \cite{Doerr20bookchapter},  we have $ (1 - \frac{B_v v}{\mu n e})^{\lambda} \leq \frac{1}{1 +  \frac{ \lambda B_v v}{\mu n e}} = \frac{\mu n e}{\mu n e + \lambda B_v v}$. Thus, the probability of generating an individual $y$ with $\abs{y}_i = v - 1$ in iteration $t$ is at least $1 - \frac{\mu n e}{\mu n e + \lambda B_v v} = \frac{\lambda B_v v}{\mu n e + \lambda B_v v}$. Hence, $T''$ is dominated by a geometric distribution with success rate $\frac{\lambda B_v v}{\mu n e + \lambda B_v v}$, and this gives
    \begin{align*}
        \E[T''] & \leq \frac{\mu n e + \lambda B_v v}{ \lambda B_v v} = O \left( \frac{\mu n }{ \lambda B_v v} + 1 \right)          \\
                & = O \left( \frac{\mu n }{ \lambda v} \frac{1}{\min \{ \ceil{\frac{n}{v}}, \ceil{\frac{\mu}{n+1}} \} } + 1 \right) \\
                & = O \left( \frac{\mu n }{ \lambda v} \max \left\{ \frac{v}{n}, \frac{n+1}{\mu} \right\} + 1 \right)               \\
                & = O \left( \frac{\mu}{\lambda} + \frac{n^2}{\lambda v} + 1 \right).
    \end{align*}
    Since $\E[T_v^i] \leq \E[T'] + \E[T'']$, we obtain $\E[T_v^i] = O(\ceil{\frac{\mu}{\lambda}} \log \ceil{\frac{n}{v}} + \frac{\mu}{\lambda} + \frac{n^2}{\lambda v} + 1).$
\end{proof}

Our main results now follows from adding up the waiting times from the previous lemma in a suitable order, a standard argument in the analysis of \OMM, and simplifying this sum. As it will be profitable in the analysis of \ojzj, we show a more general result, discussing the time to cover a subinterval of the Pareto front.

\begin{theorem}
    \label{thm:omm-runtime-bound-intervals}
    Consider the \SPEA with parent population size $\mu \geq n+1$ and offspring population size $\lambda$, optimizing the \OMM benchmark. Let $\alpha \in [0..\floor*{\frac{n}{2}}]$, and let~$N_{\alpha}$ denote the number of iterations it takes to cover the subset $\{ (v,n -v) \}_{v \in [\alpha..n - \alpha]}$ of the Pareto front, that is, $N_{\alpha} = \inf \{t \in \Z_{\geq 0} \mid [\alpha..n - \alpha] \subset \{ \ones{x} \mid x \in P_t \} \}$. Assume there exists an individual $x_0 \in P_0$ such that $\ones{x} \in [\alpha..n - \alpha]$. Then, $\E[N_{\alpha}] = O( \frac{\mu n}{\lambda} + n + \frac{n^2}{\lambda} \log n )$.
\end{theorem}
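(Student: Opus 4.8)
The plan is to reduce covering the interval to a sequence of single-value extensions, each governed by \Cref{lem:new-value-discovery-time}, and then to sum the resulting waiting times. Write $v_0 = \ones{x_0} \in [\alpha..n-\alpha]$ for the number of ones of the guaranteed starting individual. Since every bit string is Pareto-optimal for \OMM, each objective value corresponds to a distinct number of ones, so covering $\{(v,n-v)\}_{v \in [\alpha..n-\alpha]}$ is equivalent to having, for every $v \in [\alpha..n-\alpha]$, some individual with exactly $v$ ones in the parent population. I would split this into a \emph{downward} chain $v_0, v_0-1, \dots, \alpha$ and an \emph{upward} chain $v_0, v_0+1, \dots, n-\alpha$. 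Decreasing the number of ones from $v$ to $v-1$ is precisely the extension whose waiting time $T_v^1$ is bounded in \Cref{lem:new-value-discovery-time}; increasing it from $v$ to $v+1$ amounts to decreasing the number of zeros from $n-v$ to $n-v-1$, i.e., to the waiting time $T_{n-v}^0$. Both obey the common bound $g(v) \coloneqq \ceil{\mu/\lambda}\log\ceil{n/v} + \mu/\lambda + n^2/(\lambda v) + 1$ from \Cref{lem:new-value-discovery-time}.

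The first ingredient is persistence: once a value is present in the parent population, it never disappears. As every \OMM individual is non-dominated, the non-dominated set $S_t$ equals the whole total population and $\abs{F_t} \le n+1$; since $\mu \ge n+1$, \Cref{lem:survival-guarantee} gives $\floor{\mu/\abs{F_t}} \ge 1$, so at least one representative of each present value survives every selection step, and an easy induction propagates presence to all later populations. Let $\tau_v$ be the first iteration in which an individual with $v$ ones occurs in the parent population, so $\tau_{v_0}=0$. The second ingredient is a restart argument: the bound of \Cref{lem:new-value-discovery-time} holds uniformly over all configurations containing an individual of value $v$, so by the strong Markov property, conditioning on the history $\mathcal{F}_{\tau_v}$ up to $\tau_v$ yields $\E[\tau_{v-1}-\tau_v \mid \mathcal{F}_{\tau_v}] \le \E[T_v^1]+1 = O(g(v))$, the extra $+1$ accounting for the newly generated offspring entering the parent population one iteration later. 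Telescoping these bounds along the downward chain shows that the time to cover $[\alpha..v_0]$ has expectation $O\bigl(\sum_{v=\alpha+1}^{v_0} g(v)\bigr)$, and symmetrically the time to cover $[v_0..n-\alpha]$ has expectation $O\bigl(\sum_{w=\alpha+1}^{n-v_0} g(w)\bigr)$. As $N_\alpha$ is the maximum of these two covering times, and the maximum is at most their sum, while both index ranges lie in $[1..n]$, I obtain $\E[N_\alpha] = O\bigl(\sum_{v=1}^n g(v)\bigr)$.

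It remains to evaluate $\sum_{v=1}^n g(v)$, which is routine. The term $\sum_{v=1}^n \ceil{\mu/\lambda}\log\ceil{n/v}$ is $O(\frac{\mu n}{\lambda}+n)$ because $\ceil{\mu/\lambda} \le \mu/\lambda + 1$ and $\sum_{v=1}^n \log\ceil{n/v} = O(n)$ (a Stirling-type estimate, since $\sum_{v=1}^n \log(n/v) = n\log n - \log(n!) = O(n)$). Next, $\sum_{v=1}^n \mu/\lambda = \frac{\mu n}{\lambda}$, while $\sum_{v=1}^n \frac{n^2}{\lambda v} = \frac{n^2}{\lambda}\sum_{v=1}^n \frac1v = O(\frac{n^2}{\lambda}\log n)$ by the harmonic sum, and $\sum_{v=1}^n 1 = n$. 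Adding these contributions gives $\E[N_\alpha] = O(\frac{\mu n}{\lambda}+n+\frac{n^2}{\lambda}\log n)$, as claimed. Here we may assume $n \ge 3$, as required by \Cref{lem:new-value-discovery-time}; for $n \le 2$ the Pareto front has constant size and the statement is trivial.

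The step I expect to demand the most care is the chaining in the second paragraph: making rigorous that the per-step waiting times add in expectation despite the dependence between successive phases. This rests on two facts that must be checked cleanly, namely that the bound of \Cref{lem:new-value-discovery-time} is uniform in the conditioning state (legitimizing the strong-Markov restart at each $\tau_v$) and that discovered values never vanish (so the phases concatenate). The remaining steps~-- passing from the maximum of the two chains to their sum and evaluating the sums~-- are standard.
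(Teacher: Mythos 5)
Your proposal is correct and follows essentially the same route as the paper's proof: it chains the single-value discovery times $T_v^0, T_v^1$ from \Cref{lem:new-value-discovery-time} along a downward and an upward chain from the starting value, bounds $\E[N_\alpha]$ by the sum $\sum_{v=1}^{n} O\bigl(\ceil{\mu/\lambda}\log\ceil{n/v} + \mu/\lambda + n^2/(\lambda v) + 1\bigr)$, and evaluates it via the Stirling-type estimate $\sum_{v=1}^n \log(n/v) = O(n)$ (the paper's $\frac{(2n)!}{(n!)^2} \le 4^n$) and the harmonic sum. Your additional care about persistence via \Cref{lem:survival-guarantee}, the uniform-restart (strong Markov) justification, and the $n \le 2$ base case only makes explicit what the paper treats briefly.
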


\begin{proof}
    We first note that for $n \le 2$, we trivially have a constant expected runtime. This follows from the fact that Pareto-optimal solution values of the \OMM problem are not lost when $\mu \ge n+1$, see~\cite{RenBLQ24}, and that any mutation can generate any other solution with probability at least $n^{-n} \ge 2^{-2} = \frac 14$. So let us assume in the remainder that $n \ge 3$.

    Note that, for any individual $x$, we have $\ones{x} \in [\alpha..n - \alpha] $ if and only if $\zeros{x} \in [\alpha..n - \alpha] $. Let $k_0 = \zeros{x_0}$. Using notation from \Cref{lem:new-value-discovery-time}, the expected number of iterations to produce individuals containing objective values $ \{ (v, n - v)\}_{v \in [\alpha..k_0]}$ is at most $ \sum_{v = \alpha + 1}^{k_0} \E[T_v^0]$. Similarly,  the expected number of iterations to produce individuals containing objective values $ \{ (v, n - v)\}_{v \in [k_0 + 1.. n - \alpha]}$ is at most $ \sum_{v = \alpha + 1}^{n - k_0} \E[T_v^1]$. Thus, the expected number of iterations to cover the whole subset $\{ (v,n -v) \}_{v \in [\alpha..n - \alpha]} $ of the Pareto front is at most $\sum_{v = \alpha + 1}^{k_0} \E[T_v^0] +  \sum_{v = \alpha + 1}^{n - k_0} \E[T_v^1] \leq \sum_{v = \alpha + 1}^{n - \alpha}\E[T_v^0] + \E[T_v^1]$, that is, $\E[N_{\alpha}] \leq  \sum_{v = \alpha + 1}^{n - \alpha}\E[T_v^0] + \E[T_v^1]$.

    By \Cref{lem:new-value-discovery-time}, for $v \in [\alpha+1..n-\alpha]$, $i \in \{0,1\}$, we have $E[T_v^i] = O(\ceil{\frac{\mu}{\lambda}} \log \ceil{\frac{n}{v}} + \frac{\mu}{\lambda} + \frac{n^2}{\lambda v} + 1).$ Hence, we have
    \begin{align*}
        \E[N_{\alpha}] & =  \sum_{v = \alpha + 1}^{n - \alpha} O \left( \ceil*{\frac{\mu}{\lambda}} \log \ceil*{\frac{n}{v}} + \frac{\mu}{\lambda} + \frac{n^2}{\lambda v} + 1 \right)         \\
                       & = O \left(  \frac{\mu n}{\lambda} + n +  \ceil*{\frac{\mu}{\lambda}} \sum_{v = 1}^{n} \log \frac{n+v}{v} + \frac{n^2}{\lambda } \sum_{v = 1}^{n} \frac{1}{v}  \right) \\
                       & = O \left(\frac{\mu n}{\lambda} + n +  \ceil*{\frac{\mu}{\lambda}} \log \frac{(2n)!}{(n!)^2} + \frac{n^2}{\lambda} \log n  \right)                                    \\
                       & =  O \left(\frac{\mu n}{\lambda} + n +  \ceil*{\frac{\mu}{\lambda}} \log 4^n + \frac{n^2}{\lambda} \log n  \right) \tag{1}  \label{eq:fact-to-exp}                    \\
                       & =  O \left(\frac{\mu n}{\lambda} + n + \frac{n^2}{\lambda} \log n  \right).
    \end{align*}
    To obtain equality \eqref{eq:fact-to-exp}, we used the fact that $\frac{(2n)!}{(n!)^2} \leq 4^n$.
\end{proof}

Applying \Cref{thm:omm-runtime-bound-intervals} with $\alpha = 0$ (for this we note that, trivially, the random initial population contains individuals on the Pareto front) we obtain the claimed runtime bound for the \SPEA applied to \OMM (\Cref{thm:omm-runtime-bound}).

\subsection{Runtime Analysis on the \ojzj Problem}
\label{sec:ojzj-runtime}

We move on to the \ojzj benchmark and show in \Cref{thm:ojzj-runtime} that for all gap sizes $k \in [2.. \floor{n/2}]$, the \SPEA computes the Pareto front in an expected runtime of $O(\frac{n^{k+1}}{\lambda} +  \frac{\mu}{\lambda} n  +  n)$ iterations, or $O (n^{k+1} + \mu n + \lambda n)$ function evaluations, when the parent population size is at least the size of the Pareto front, that is, $\mu \geq n -2k+3$. Consequently, unless a population size exceeds the asymptotic order of $n^k$, the runtime bound in function evaluations is (asymptotically) independent of the population sizes.

Until the end of the section, we consider the \SPEA with general population sizes $\mu$ and $\lambda$, optimizing the \ojzjk with gap size $k \in [2.. \floor{n/2}]$. Note that this implies $n \ge 4$.

\begin{theorem}
    \label{thm:ojzj-runtime}
    The expected runtime of the \SPEA with parent population size $\mu \geq n - 2k + 3$ and offspring population size $\lambda$ on the \ojzjk problem with gap size $k \in [2..\floor{n/2}]$ is $O(\frac{n^{k+1}}{\lambda} + \frac{\mu}{\lambda} n + n)$ iterations, or $O( n^{k+1} + \mu n + \lambda n)$ function evaluations.
\end{theorem}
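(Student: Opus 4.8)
The plan is to decompose the optimization into two phases: first covering the inner region $F_I^*$ of the Pareto front, and second filling in the two extreme points $(k, n+k)$ and $(n+k, k)$ corresponding to the all-ones and all-zeros strings. The inner region $F_I^* = \{(i, n+2k-i) \mid i \in [2k..n]\}$ consists exactly of objective values attained by individuals whose number of ones lies in $[k..n-k]$, and on this inner region the $\ojzjk$ function behaves precisely like a shifted $\OMM$ function (the objective is $k + \ones{x}$ in each coordinate, which is monotone in $\ones{x}$). I would therefore like to invoke \Cref{thm:omm-runtime-bound-intervals} with the interval parameter $\alpha = k$ to cover the inner region. The first thing to check is that $\mu \geq n-2k+3 = \abs{F_I^*} + 2$ suffices: since the largest incomparable set has size $\overline M = n - 2k + 3$ and the inner front has $n - 2k + 1$ values, the balanced-selection guarantee (\Cref{lem:survival-guarantee}) applies and inner Pareto-optimal values are never lost once found. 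This phase should contribute $O(\frac{\mu n}{\lambda} + n + \frac{n^2}{\lambda}\log n)$ iterations, which is dominated by the bound we want.

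The second phase is the genuinely $\ojzj$-specific part: generating the two extreme Pareto-optimal points. The string $1^n$ has objective value $(k, n+k)$ and is reached from an inner individual with exactly $n-k$ ones by flipping the remaining $k$ zeros simultaneously; symmetrically $0^n$ is reached from an individual with $k$ ones by flipping $k$ ones. Each such specific $k$-bit mutation has probability $\Theta(n^{-k})$. Here I would again exploit the multiplicative growth of \Cref{lem:mult-growth2}: once the boundary inner value $i = n$ (i.e., $\ones{x} = n-k$) is present, its multiplicity grows until it reaches the fair share $\floor{\mu / \overline M}$, after which many parents are available to attempt the rare $k$-bit flip. The waiting time to grow the multiplicity to $B = \min\{n^{k}, \floor{\mu/\overline M}\}$ copies is $O(\ceil{\mu/\lambda}\log B) = O(\ceil{\mu/\lambda} \cdot k \log n)$ by \Cref{lem:mult-growth2}, and then the time to produce the $k$-bit jump, analogous to the $T''$ estimate in \Cref{lem:new-value-discovery-time}, is a geometric random variable with success probability $\Omega(\frac{\lambda B}{\mu n^{k}})$ per iteration, giving expected $O(\frac{\mu n^{k}}{\lambda B} + 1) = O(\frac{n^{k+1}}{\lambda} + \frac{\mu}{\lambda} + 1)$ iterations. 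Summing the growth and jump times for both extreme points yields $O(\frac{n^{k+1}}{\lambda} + \frac{\mu}{\lambda}n + n)$, matching the target after converting to function evaluations via the factor $\mu + \lambda$ per iteration.

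The main obstacle I anticipate is justifying that the balanced-selection guarantee of \Cref{lem:survival-guarantee} still protects the boundary inner values during the second phase, given that the maximum incomparable set size is $\overline M = n-2k+3$ rather than the front size. Crucially, during phase two some extreme values may not yet be present, so $\abs{F_t}$ could be as small as the inner-front size; one must verify that $\floor{\mu / \abs{F_t}} \geq \floor{\mu/\overline M}$ always holds (which it does, since $\abs{F_t} \leq \overline M$), so the growth argument of \Cref{lem:mult-growth2} with the uniform denominator $\overline M$ is valid throughout. A secondary subtlety is the interaction between the two phases and the two extreme points: one must ensure that while waiting to generate $1^n$ we do not lose the boundary value $\ones{x} = n-k$ needed as its source, and symmetrically for $0^n$. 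This again follows from the survival guarantee applied to those boundary values, since they remain non-dominated and their required multiplicity $B \leq \floor{\mu/\overline M}$ stays within the protected range. Finally, I would note that the reduction of the inner region to $\OMM$ requires care in checking that no inner individual is dominated by an extreme point during phase one; since inner objective values are pairwise incomparable with the extremes on the Pareto front, this holds and the $\OMM$ analysis transfers cleanly.
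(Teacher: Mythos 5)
Your decomposition is essentially the paper's own (cover the inner front by invoking \Cref{thm:omm-runtime-bound-intervals} with $\alpha = k$ and the adjusted incomparable-set size $\overline M = n-2k+3$; then reach the two extreme points via the multiplicative growth of \Cref{lem:mult-growth2} followed by a geometric waiting time for the $k$-bit jump), and your phase-two accounting, including the survival-guarantee subtleties you flag, is sound. However, you are missing an entire first stage: \Cref{thm:omm-runtime-bound-intervals} carries the precondition that the population \emph{already contains} an individual with $\ones{x} \in [k..n-k]$, and for \ojzjk this is not guaranteed at initialization. For $k$ close to $\floor{n/2}$ the inner Pareto set shrinks to strings with almost exactly $n/2$ ones, a single individual hits it only with probability $\Theta(1/\sqrt{n})$, and $\mu$ may be as small as $n-2k+3 = 3$, so with probability $1 - O(1/\sqrt{n})$ the entire initial population lies outside the inner region; nothing in your argument bounds the time to create the first inner point when every individual has fewer than $k$ ones or fewer than $k$ zeros. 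The paper closes this with a separate stage (\Cref{lem:stage1}): a single mutation of any parent produces a point with exactly $k$ ones (or zeros) with probability at least $1/(e k^k)$, giving expected at most $\frac{e}{\lambda}k^k + 1$ iterations, which is absorbed into the final bound because $k^k \le (n/2)^k \le n^k$. Without some such argument, your phase one cannot start.

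There is a second, smaller gap in your final summation. Your growth bound $O(\ceil{\mu/\lambda}\log B)$ with $B = \min\{n^k, \floor{\mu/\overline M}\}$ is fine (capping $B$ at $n^k$ is even a nice refinement over the paper, which grows all the way to $\floor{\mu/\overline M}$), but your concluding claim that $\ceil{\mu/\lambda}\, k\log n = O(\frac{n^{k+1}}{\lambda} + \frac{\mu}{\lambda}n + n)$ fails in a corner regime: take $k = n/2$ (so $\overline M = 3$) and $\mu = \lambda = 3n^{n/2}$; then $B = n^k$, the growth term is $\Theta(k \log n) = \Theta(n\log n)$, while the claimed bound evaluates to $O(n)$. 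In general the overshoot by a $\log n$ factor occurs whenever $k = \omega(n/\log n)$ and $\mu \ge n^k\overline M$. The paper eliminates exactly this corner with a case split at the end of its proof of \Cref{thm:ojzj-runtime}: if $\mu \le 4^n$, then $\log\frac{\mu}{\overline M} = O(n)$ and the growth term is $O(\frac{\mu}{\lambda}n + n)$; if $\mu > 4^n$, a union bound shows the random initial population contains every bit string except with probability at most $2^n e^{-\sqrt{\mu}}$, so the expected number of iterations is $O(1)$. You need this (or an equivalent) closing argument; as written, your proof establishes the theorem only up to an extra $\log n$ factor in the superexponential-$\mu$, near-maximal-$k$ regime.
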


We shall several times use the following elementary estimate to translate success probabilities of single mutations into runtime estimates.

\begin{lemma}
    \label{lem:expected-number-of-iterations-based-on-single-success}
    Let $\lambda \in \Z_{\geq 1}$.
    Consider a sequence of independent random Bernoulli trials, each with success probability $q \in (0, 1]$.
    We call~$\lambda$ non-overlapping consecutive trials an \emph{iteration}.
    Let~$T$ denote the first iteration containing a success.
    Then $\E[T] \leq 1 + \frac{1}{q \lambda}$.
\end{lemma}

\begin{proof}
    Let~$q'$ denote the probability that a single iteration contains a success.
    Since the trials are independent,~$T$ follows a geometric distribution with success probability~$q'$, resulting in $\E[T] = \frac{1}{q'}$.
    Hence, we estimate in the following a lower bound for~$q'$, which then proves the claim.

    Since the trials are independent, the probability of an iteration containing no success is $(1 - q)^\lambda = 1 - q'$.
    By \cite[Corollary~1.4.3]{Doerr20bookchapter}, we obtain $(1 - q)^\lambda \leq \frac{1}{1 + q \lambda}$.
    Consequently, we have $q'  \geq 1 - \frac{1}{1 + q \lambda} = \frac{q \lambda}{1 + q \lambda}$, which concludes the proof, noticing that $q \lambda > 0$.
\end{proof}

As in \cite{DoerrQ23tec}, we decompose the optimization process into three stages, which are analyzed separately in the following. The first estimates the time to find any solution on the inner part of the Pareto front. Since here any parent can be used (in our estimate), we do not require yet the main machinery of this work.

\begin{lemma}
    \label{lem:stage1}
    Let $T_1$ be the number of iterations needed to find the first inner Pareto optimum, that is $T_1 = \inf \{ t \in \Z_{\geq 0} \mid P_t \cap S_{I}^* \neq \emptyset \}$. Then $\E[T_1] \leq \frac{e}{\lambda} k^k + 1$.
\end{lemma}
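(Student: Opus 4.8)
The plan is to reduce the lemma to the elementary estimate of \Cref{lem:expected-number-of-iterations-based-on-single-success}. I would view each of the $\lambda$ offspring created in an iteration as a Bernoulli trial whose success is the event that the offspring lands in the inner Pareto set $S_I^*$. If I can show that, as long as no inner optimum has been found, each such trial succeeds with probability at least $q \coloneqq \frac{1}{e k^k}$ (conditionally on the current population), then \Cref{lem:expected-number-of-iterations-based-on-single-success} immediately yields $\E[T_1] \le 1 + \frac{1}{q\lambda} = 1 + \frac{e k^k}{\lambda} = \frac{e}{\lambda} k^k + 1$, which is the claim.

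The crucial step is therefore the per-offspring bound, and I would establish it in a population-independent form: for \emph{every} $x \in \{0,1\}^n$ with $\ones{x} \notin [k..n-k]$ we have $\Pr[\mutate(x) \in S_I^*] \ge \frac{1}{e k^k}$. This is exactly what avoids the usual $\frac{1}{\mu}$ loss from the uniform parent selection: since every non-inner parent succeeds with probability at least $q$, and for every $t < T_1$ all individuals of $P_t$ are non-inner, a single offspring is an inner optimum with probability at least $q$ regardless of which parent the algorithm selects.

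To prove the per-parent bound, by the symmetry of \ojzjk I would assume $\ones{x} = j \le k-1$, the case $\ones{x} \ge n-k+1$ being handled by flipping ones rather than zeros. It suffices to reach $\ones{} = k \in [k..n-k]$, which occurs whenever the mutation flips exactly $k-j$ of the $n-j$ zero-bits of $x$ and leaves all other bits unchanged. Summing over these disjoint events gives the lower bound $\binom{n-j}{k-j}\, n^{-(k-j)} (1-\tfrac{1}{n})^{n-(k-j)}$. I would then use $(1-\tfrac{1}{n})^{n-(k-j)} \ge (1-\tfrac{1}{n})^{n-1} \ge e^{-1}$ and bound the combinatorial factor from below, the worst case being the extremal parents $0^n$ and $1^n$, which require the full $k$ flips: there $\binom{n}{k} n^{-k} \ge (n/k)^k n^{-k} = k^{-k}$, producing exactly $\frac{1}{e k^k}$.

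The main obstacle is this last combinatorial estimate, namely verifying that $\frac{1}{e k^k}$ holds uniformly over all $j \in [0..k-1]$ rather than only at $j = 0$, i.e.\ that flipping $k-j$ of the $n-j$ zeros is never less likely than $\frac{1}{e k^k}$. The cleanest route is to argue that landing in the inner region becomes less likely as more bits must be flipped, so that $0^n$ and $1^n$ are genuinely the hardest starting points; the remaining arithmetic ($\binom{n}{k} \ge (n/k)^k$ and $(1-\tfrac{1}{n})^{n-1} \ge 1/e$) is then routine. I would also note the mild formal point that \Cref{lem:expected-number-of-iterations-based-on-single-success}, although stated for independent trials with a fixed success probability, applies here by stochastic domination, since each offspring has success probability at least $q$ irrespective of the history up to $T_1$.
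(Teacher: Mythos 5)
Your proposal is correct and takes essentially the same route as the paper's proof: a uniform per-offspring success probability of $\frac{1}{ek^k}$ valid for every non-inner parent (worst case $0^n$ and $1^n$), followed by an application of \Cref{lem:expected-number-of-iterations-based-on-single-success}. The monotonicity step you flag as the main obstacle is exactly the paper's inequality chain $\binom{n-\ell}{k-\ell}\, n^{\ell-k} \ge \binom{n}{k}\, n^{-k}$, which holds since $k!/(k-\ell)! \ge 1$ and $n!/(n-\ell)! \le n^{\ell}$, so your sketch closes without difficulty.
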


\begin{proof}
    If the initial population $P_0$ contains an individual $x$ with $x \in S_{I}^*$, then trivially we have $T_1 = 0$.
    Assume now that all initial individuals are not on the inner part of the Pareto front. Hence for all $x \in P_0$, we have $\ones{x} < k$ or $\zeros{x} < k$. Consider a single mutation operation with parent individual~$y$. Without loss of generality, suppose that $\ell \coloneqq \ones{y} < k$. Let $y'$ be the offspring generated in this mutation operation. Then
    \begin{align*}
        \Pr & [y' \in S_{I}^* ] \geq \Pr[ \ones{y'} = k]                                                                \\
            & \geq \binom{n-\ell}{k-\ell} \left( \frac{1}{n} \right)^{k-\ell} \left( 1 - \frac{1}{n} \right)^{n-k+\ell} \\
            & \geq \frac{n^{\ell} (n-\ell)!}{(k-\ell)!(n-k)!} \frac{1}{n^k e}
        \geq \binom{n}{k} \frac{1}{n^ke}
        \geq \frac{1}{e k^k},
    \end{align*}
    where, in the last inequality, we use the fact that $\binom{n}{k} \geq ( \frac{n}{k} )^k$.

    The result follows by an application of \Cref{lem:expected-number-of-iterations-based-on-single-success}, as the mutations are independent.
\end{proof}

From now on, we assume that the Parent population size is at least $\mu \ge n - 2k + 3$.

The next stage is to find the whole inner Pareto front. This process is essentially identical to the one of finding a corresponding segment of the Pareto front of the \OMM problem. With foresight, we have proven in \Cref{thm:omm-runtime-bound} in the \OMM analysis a time bound for covering a subinterval of the Pareto front. Taking $\alpha = k$ and using exactly the same arguments, we obtain the following lemma.

\begin{lemma}
    \label{lem:stage2}
    Let $T_2$ be the number of iterations it takes to find the inner Pareto front once an inner Pareto optimum is in the population, that is,
    \[
        T_2 = \inf \{t \in \Z_{\geq 0} \mid F_{I}^* \subseteq f(P_t) \} - T_1.
    \]
    Then $\E[T_2]  = O( \frac{\mu n}{\lambda} + n + \frac{n^2}{\lambda} \log n)$.
\end{lemma}

It remains to find the two extremal points of the Pareto front. This analysis again builds heavily on our understanding of the population dynamics.

\begin{lemma}
    \label{lem:stage3}
    Let $T_3$ be the number of iterations the algorithm takes to find the two extremal Pareto optima once the inner part of the Pareto front is found, that is,
    \[
        T_3 = \inf \{t \in \Z_{\geq 0} \mid F^* \subseteq f(P_t) \} - T_2 - T_1.
    \]
    Then, $\E[T_3]  = O(  1 + \frac{n^{k+1}}{\lambda} + \ceil*{\frac{\mu}{\lambda}}\log \frac{\mu}{n - 2k + 3}  )$.
\end{lemma}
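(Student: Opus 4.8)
The plan is to treat the two still-missing extremal points $0^n$ and $1^n$ separately and symmetrically, bounding $T_3$ by the sum of the two corresponding waiting times via linearity of expectation. By the symmetry of \ojzjk under bit complementation, it suffices to bound the expected time to generate $1^n$ (the individual with $\ones{x}=n$, objective value $(n+k,k)$) once the inner front is present, and then double the resulting bound.

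First I would observe that the only inner objective value from which $1^n$ is reachable by a $k$\=/bit mutation is $v=(n,2k)$, which corresponds exactly to the individuals with $\ones{x}=n-k$; every other inner value has fewer ones and would require flipping more than $k$ bits. Since $v$ is Pareto\=/optimal (hence never strictly dominated) and is present once the inner front is covered, I would apply \Cref{lem:mult-growth2} with target $B=\floor{\mu/\overline M}=\floor{\mu/(n-2k+3)}$ to grow the number of individuals with $\ones{x}=n-k$ to at least $B$. This costs expected time $O(\ceil{\mu/\lambda}\log B)=O(\ceil{\mu/\lambda}\log\frac{\mu}{n-2k+3})$, matching the last term of the claim. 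Because $B\le\floor{\mu/\overline M}$ and because all present values are pairwise incomparable, so that $\abs{F_t}\le\overline M$ throughout, \Cref{lem:survival-guarantee} then guarantees that at least $B$ copies of $\ones{x}=n-k$ persist in every subsequent parent population.

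Next I would estimate the jump time. With at least $B$ parents having $\ones{x}=n-k$, a single mutation produces $1^n$ (by selecting such a parent and flipping exactly its $k$ zeros and none of its $n-k$ ones) with probability at least $\frac{B}{\mu}\cdot\frac{1}{n^k}(1-\frac1n)^{n-k}\ge\frac{B}{\mu n^k e}$, using $(1-\frac1n)^{n-k}\ge(1-\frac1n)^{n-1}\ge\frac1e$ for $k\ge2$. Applying \Cref{lem:expected-number-of-iterations-based-on-single-success} with $\lambda$ mutations per iteration bounds the expected jump time by $1+\frac{\mu n^k e}{B\lambda}$. Since $\mu\ge n-2k+3$ yields $B\ge\frac{\mu}{2(n-2k+3)}$, and since $n-2k+3\le n$ for $k\ge2$, this simplifies to $O(1+\frac{n^{k+1}}{\lambda})$, the middle term of the claim.

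Combining the growth and jump phases bounds the expected time to generate $1^n$ by $O(1+\frac{n^{k+1}}{\lambda}+\ceil{\mu/\lambda}\log\frac{\mu}{n-2k+3})$; the symmetric argument for $0^n$ (growing $\ones{x}=k$ and flipping its $k$ ones) gives the same, and summing the two yields the stated bound on $\E[T_3]$. The main obstacle I anticipate is not any individual estimate but the bookkeeping that ties the pieces together: one must check that $v=(n,2k)$ stays non\=/dominated throughout stage~$3$ so that both \Cref{lem:mult-growth2} and \Cref{lem:survival-guarantee} remain applicable, that growing one value to $B$ copies never crowds out the remaining inner values (which holds since $\overline M\cdot B\le\mu$), and that the already\=/found inner front as well as the newly found extremal point are never lost — each of these being a direct consequence of Pareto\=/optimality combined with the balanced survival guarantee of \Cref{lem:survival-guarantee}.
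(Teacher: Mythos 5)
Your proposal is correct and follows essentially the same route as the paper's proof: both treat $0^n$ and $1^n$ symmetrically, first grow the number of parents at the relevant boundary value ($\ones{x}=n-k$, respectively $\ones{x}=k$) to $B=\floor{\mu/(n-2k+3)}$ copies via \Cref{lem:mult-growth2} (with \Cref{lem:survival-guarantee} and the Pareto-optimality of the boundary value ensuring no losses), and then bound the $k$-bit jump using \Cref{lem:expected-number-of-iterations-based-on-single-success} with per-mutation success probability at least $\frac{B}{\mu e n^k}$, yielding $O(1+\frac{n^{k+1}}{\lambda})$ for that phase. Your additional bookkeeping remarks only make explicit what the paper leaves implicit and introduce no gap.
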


\begin{proof}
    To estimate $T_3$, we regard the time it takes to generate $1^n$ from an individual $x$ with $\ones{x} = n-k$, and $0^n$ from an $x$  with $\ones{x} = k$. By symmetry, we focus on the latter.

    We first regard the time it takes to have $B_k = \floor{ \frac{\mu}{n-2k+3}}$ individuals with $k$ ones in the population, aiming at applying \Cref{lem:mult-growth2}.

    For $t \in \Z_{\ge 0}$, define $X_t$ as the number of individuals with $\ones{x} = k$ in the $t$-th iteration of stage 3, that is $X_t = \abs{ \{ x \in R_{T_1 + T_2 + t} \mid \ones{x} = k\} }$. Since stage $3$ is entered once the inner region of the Pareto front is fully covered, we have $X_0 \geq 1$.
    Let $T' = \inf \{ t \in \Z_{\ge 0} \mid X_t \geq B_k \}$ be the number of iterations until $B_k$ individuals with $\ones{x} =k$ are accumulated. As mentioned in the preliminaries, the size of the largest set of pairwise incomparable values is $n-2k+3$. Moreover, the value $(2k, n)$ is on the Pareto front, hence non-dominated.
    Thus, by \Cref{lem:survival-guarantee}, we do not lose individuals with~$k$ as long as we have fewer than~$B_k$.
    Consequently, \Cref{lem:mult-growth2} yields  $\E[T'] = O( \ceil*{\frac{\mu}{\lambda}}\log B_k )$.

    Once we have $B_k$ individuals with $k$ ones, each newly generated individual has a probability of at least $\frac{B_k}{\mu} (\frac{1}{n})^k (1 - \frac{1}{n})^{n-k} \ge \frac{B_k}{\mu e n^k}$ of being the the desired solution $0^n$.
    By \Cref{lem:expected-number-of-iterations-based-on-single-success}, this results in an expected number of at most $\frac{\mu e n^k}{\lambda B_k} + 1 = O(\frac{n^{k+1}}{\lambda}) +1$ iterations.

    Together with the time to have $B_k$ possible parents, the expected time to have $0^n$ in the population is $O(1 + \frac{n^{k+1}}{\lambda} + \ceil{\frac{\mu}{\lambda}}\log \frac{\mu}{n - 2k + 3})$. Adding the identical estimate for the time to find $1^n$, we have proven the claim.
\end{proof}

By adding up the runtime estimates for each stage (and some not very interesting technical argument needed to simplify the runtime expression), we obtain the main result of this subsection.

\begin{proof}[Proof of \Cref{thm:ojzj-runtime}]
    We note that $E[T_1]$ (\Cref{lem:stage1}) is asymptotically smaller than our estimate for $E[T_3]$ (\Cref{lem:stage3}), which contains the term $\frac{n^{k+1}}{\lambda}$. Hence, also using \Cref{lem:stage2}, the expected runtime is at most
    \begin{align*}
        \E[T_2] + \E[T_3] & = O \left( \frac{n^{k+1}}{\lambda} + \frac{\mu}{\lambda} n + n\right.                 \\
                          & \qquad\quad \left. + \ceil*{\frac{\mu}{\lambda} } \log \frac{\mu}{n - 2k + 3} \right)
    \end{align*}
    iterations. If $\mu \leq 4^n$, then $\log \frac{\mu}{n} = O (n)$, and the last term is asymptotically not larger than the second or third, showing our claim. Assume now that $\mu > 4^n$. Then, since the parent population size is large, the initial population $P_0$ is likely to cover all possible bit strings $x  \in \{0,1\}^n$, meaning that the Pareto front is covered after 0 iterations. Indeed, the probability that $P_0$ does not contain a particular bit string $x \in \{0,1\}^n$ is
    \[
        (1 - 2^{-n})^{\mu} \leq e^{-2^{-n} \mu}
        \leq e^{- \mu^{1/2}}.
    \]
    Thus, by the union bound, the probability that there exists $x \in \{0,1\}^n$ such that $x \notin P_0$ is at most $2^n e^{- \mu^{1/2}}$. Only in this case, the algorithm needs more than $0$ iterations to cover the Pareto front. Since our previous analyses did not make any assumptions on the initial population, we can also for this case estimate the remaining runtime by $O(\frac{n^{k+1}}{\lambda} + \frac{\mu}{\lambda} n + n + \ceil{\frac{\mu}{\lambda} } \log \frac{\mu}{n})$.
    Therefore, if $\mu \geq 4^n$, the expected runtime of the algorithm is
    \begin{align*}
        2^n e^{- \mu^{1/2}} O \left( \frac{n^{k+1}}{\lambda} + \frac{\mu}{\lambda} n + n + \ceil*{\frac{\mu}{\lambda} } \log \frac{\mu}{n} \right)
        = O(1)
    \end{align*}
    iterations.
\end{proof}

\subsection{Runtime Analysis on the \leadingonestrailingzeros Problem}

We finally analyze in \Cref{thm:lotz-runtime} how the \SPEA optimizes the \lotz benchmark, the third very popular benchmark in the field. We prove a runtime guarantee of an expected number of $O( (\frac{\mu}{\lambda} + 1) n \log \frac{\mu}{n+1} +  \frac{n^3}{\lambda} + n)$ iterations, or ${O ((\mu + \lambda) n \log \frac{\mu}{n+1} +  n^3 + \lambda n)}$ function evaluations, provided that the parent population size $\mu$ is at least the size of the Pareto front $n+1$.

\begin{theorem}
    \label{thm:lotz-runtime}
    The expected runtime of the \SPEA with parent population size $\mu \geq n  + 1 $ and offspring population size $\lambda$, optimizing the \lotz benchmark is $O((\frac{\mu}{\lambda} + 1) n \log \frac{\mu}{n+1} +  \frac{n^3}{\lambda} + n)$ iterations, or ${O((\mu + \lambda) n \log \frac{\mu}{n+1} + n^3 + \lambda n)}$ function evaluations.
\end{theorem}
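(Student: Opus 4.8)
The plan is to split the run into two phases, each bounded by combining the multiplicative-growth estimate \Cref{lem:mult-growth2} with the single-success estimate \Cref{lem:expected-number-of-iterations-based-on-single-success}. Phase one \emph{reaches} the Pareto front, that is, produces some $1^k0^{n-k}$; phase two \emph{spreads} along the front until all $n+1$ points $1^j0^{n-j}$ are present. Two facts about \lotz are central: neighboring front points differ in a single bit, so from a fixed parent $1^k0^{n-k}$ an adjacent front point is produced with probability at least $\frac1n(1-\frac1n)^{n-1}\ge\frac1{en}$; and the largest incomparable set has size $\overline M=n+1$, so by \Cref{lem:survival-guarantee} up to $B\coloneqq\floor*{\frac{\mu}{n+1}}\ge1$ copies of any non-dominated value are safe from elimination.

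The common building block bounds the cost of one \emph{advance}. First I accumulate $B$ copies of a present non-dominated value $v$: by \Cref{lem:mult-growth2} (with this $B$ and $\overline M=n+1$) this takes expected $O(\ceil*{\frac{\mu}{\lambda}}\log B)=O(\ceil*{\frac{\mu}{\lambda}}\log\frac{\mu}{n+1})$ iterations. With $B$ copies present, each offspring picks such a parent and performs the one critical bit flip with probability at least $\frac{B}{\mu}\cdot\frac1{en}=\frac{B}{e\mu n}$, so \Cref{lem:expected-number-of-iterations-based-on-single-success} adds expected $O(\frac{\mu n}{\lambda B}+1)=O(\frac{n^2}{\lambda}+1)$ iterations (since $B=\Theta(\mu/n)$ up to the boundary case $B=1$, where $\mu=O(n)$). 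Thus one advance costs $O(\ceil*{\frac{\mu}{\lambda}}\log\frac{\mu}{n+1}+\frac{n^2}{\lambda}+1)$ iterations in expectation.

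For phase two this applies verbatim with $v$ a front point (never dominated, so the stopping clause of \Cref{lem:mult-growth2} never fires) and the target the adjacent front point; expanding outward from one point in both directions and summing over the $n$ missing points, exactly as in the \OMM analysis, yields $O(n\ceil*{\frac{\mu}{\lambda}}\log\frac{\mu}{n+1}+\frac{n^3}{\lambda}+n)$ iterations. For phase one I track a value $v$ attaining $\max_{x\in P_t}(f_1(x)+f_2(x))$, which is non-dominated since a strict dominator cannot share this maximal sum. Here \Cref{lem:mult-growth2} stops either at $B$ copies of $v$ or when $v$ is dominated; in the latter case the dominator has strictly larger sum, so the record advances, and in the former case flipping, in a copy of $v$, the $0$ just after its leading ones (or the $1$ just before its trailing zeros) yields an offspring that dominates $v$ and again advances the record. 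Since the sum lies in $[0..n]$ and rises at each advance, there are $O(n)$ advances, so phase one obeys the same bound. Adding the phases and multiplying by $\lambda$ gives the claimed $O((\frac{\mu}{\lambda}+1)n\log\frac{\mu}{n+1}+\frac{n^3}{\lambda}+n)$ iterations, equivalently $O((\mu+\lambda)n\log\frac{\mu}{n+1}+n^3+\lambda n)$ function evaluations.

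The main obstacle is phase one: I must fix one record value $v$ and argue its copies really grow under \Cref{lem:mult-growth2} even though incomparable values of the same sum may coexist (this holds because \Cref{lem:survival-guarantee} protects copies per value), and I must treat the two ways the phase progresses---$v$ being dominated versus a forced dominating mutation---noting that each strictly increases the record sum. A minor point, handled as in the \ojzj proof, is the factor $\log\frac{\mu}{n+1}$ for very large $\mu$: if $\mu>4^n$ the initial population already covers the whole front with overwhelming probability, contributing only $O(1)$ in expectation, while for $\mu\le4^n$ the logarithm is $O(n)$ and the stated bound holds; the cases $n\le2$ are trivial.
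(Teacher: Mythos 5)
Your proposal is correct and takes essentially the same route as the paper: its Lemma~\ref{lem:lotz-increase} uses exactly your phase-one progress measure $\nu(P)=\max\{f_1(x)+f_2(x)\mid x\in P\}$, with the same case split between the record value~$v$ becoming dominated (which itself advances the record) and a forced dominating one-bit mutation after accumulating $B=\floor{\frac{\mu}{n+1}}$ copies via \Cref{lem:mult-growth2}, \Cref{lem:survival-guarantee}, and \Cref{lem:expected-number-of-iterations-based-on-single-success}, while its Lemma~\ref{lem:lotz-spreadout} is precisely your phase two. The only deviation is your closing large-$\mu$ case distinction, which is superfluous here (unlike in the \ojzj analysis) because the factor $\log\frac{\mu}{n+1}$ already appears in the stated bound.
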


We follow the usual and natural approach of analyzing separately the times to first reach the Pareto front and then to spread out there until the full Pareto front is covered~\cite{LaumannsTZ04}. We cannot use the technically easier appearing variant of \cite{DoerrIK25} that first regards the growth of the number of leading ones in the population and then reuses this analysis for the number of trailing zeros. The reason is that an objective value corresponding to a certain number of leading ones may be dominated by another one with same number of leading ones, resetting the multiplicative growth process that is the heart of all of our results. We therefore regard how the maximum sum of objective values in the population grows. This avoids the problem since now a domination implies an actual increase of this progress measure. The time to increase the sum of objectives is estimated in the following lemma.

\begin{lemma}
    \label{lem:lotz-increase}
    For a set of individuals $P$, let $\nu(P) = \max\{f_1(x)+f_2(x) \mid x \in P\}$. Consider the \SPEA with $\mu \ge n+1$ optimizing the \lotz benchmark with $n \ge 3$. Assume that in some iteration $t_0 \in \Z_{\geq 0}$ we have $\nu(P_{t_0}) < n$.

    Let $T = \inf \{ t \in \Z_{> t_0} \mid \nu(P_t) > \nu(P_{t_0})\}$. Then
    \[
        \E[T - t_0] = O(\ceil{\tfrac{\mu}{\lambda}} \log \tfrac{\mu}{n+1} +  \tfrac{n^2}{\lambda} + 1).
    \]
\end{lemma}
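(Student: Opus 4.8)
The plan is to track the progress measure $\nu(P_t)$ and argue that, once some individual attains the maximal objective-sum value $s \coloneqq \nu(P_{t_0}) < n$, the number of individuals achieving this value grows multiplicatively (by the machinery of \Cref{lem:mult-growth2}) until enough ``witnesses'' exist to make the single improving mutation likely. Concretely, on \lotz an individual with $f_1(x)+f_2(x)=s$ has the form (prefix of $j$ ones)(block)(suffix of $s-j$ zeros) for some $j$; a single specific bit flip~-- flipping the first $0$ after the leading-ones block, or the first $1$ before the trailing-zeros block~-- increases the objective sum. The first obstacle is that the value $s$ may be realized by several \emph{distinct} objective values $(f_1,f_2)$ with $f_1+f_2=s$; I would therefore apply the growth argument to a \emph{single} non-dominated objective value $v=(f_1,f_2)$ with $f_1+f_2=s$ that is present in $P_{t_0}$, rather than to the sum-level set, so that $X_t\coloneqq\abs{\{x\in R_{t_0+t}\mid f(x)=v\}}$ is exactly the quantity governed by \Cref{lem:survival-guarantee} and \Cref{lem:mult-growth2}.

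The key steps, in order, are as follows. First I fix such a $v$ and set $B=\floor{\tfrac{\mu}{n+1}}$, noting that for \lotz the Pareto front and hence the largest incomparable set has size $\overline M=n+1$, so $B\in[1..\floor{\mu/(n+1)}]$ as required by \Cref{lem:mult-growth2}. Second, I invoke \Cref{lem:mult-growth2} to conclude that within an expected $O(\ceil{\mu/\lambda}\log B)=O(\ceil{\mu/\lambda}\log\tfrac{\mu}{n+1})$ iterations either $X_t$ reaches $B$ or $v$ is strictly dominated; but on \lotz a value is dominated only by one with strictly larger objective sum, so in the latter case $\nu$ has already increased and $T-t_0$ is already bounded. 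Third, assuming $X_t$ reaches $B$, \Cref{lem:survival-guarantee} guarantees at least $B$ copies of an individual with value $v$ persist in every subsequent parent population (as long as $\nu$ has not yet increased). Fourth, for each such parent the probability of the one improving flip is $\tfrac{1}{n}(1-\tfrac1n)^{n-1}\ge\tfrac{1}{en}$, so a single offspring improves $\nu$ with probability at least $\tfrac{B}{\mu}\cdot\tfrac{1}{en}$; applying \Cref{lem:expected-number-of-iterations-based-on-single-success} with $q=\tfrac{B}{\mu en}$ gives an expected $1+\tfrac{\mu e n}{\lambda B}=O(1+\tfrac{\mu n}{\lambda B})$ additional iterations.

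It remains to simplify the second term. Since $B=\floor{\mu/(n+1)}$, we have $\tfrac{\mu n}{\lambda B}=O\!\left(\tfrac{\mu n}{\lambda}\cdot\tfrac{n+1}{\mu}\right)=O\!\left(\tfrac{n^2}{\lambda}\right)$, using $\mu\ge n+1$ so that $B\ge 1$ and $\mu/B=O(n)$. Adding the two phases and using linearity of expectation yields
\[
\E[T-t_0]=O\!\left(\ceil{\tfrac{\mu}{\lambda}}\log\tfrac{\mu}{n+1}\right)+O\!\left(1+\tfrac{n^2}{\lambda}\right)=O\!\left(\ceil{\tfrac{\mu}{\lambda}}\log\tfrac{\mu}{n+1}+\tfrac{n^2}{\lambda}+1\right),
\]
as claimed. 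The main subtlety I expect to guard against is the interaction between the growth phase and the improvement phase: the argument must ensure that whenever $\nu(P_t)=s$ still holds, the value $v$ is non-dominated (so \Cref{lem:survival-guarantee} applies to preserve its copies), and that the very event of dominating $v$ coincides with an increase of $\nu$. This is exactly where \lotz's structure~-- domination forcing a strictly larger objective sum~-- is essential, and it is the reason we measure progress by $\nu$ rather than by the number of leading ones alone, echoing the motivating remark before the lemma.
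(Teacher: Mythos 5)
Your proposal is correct and takes essentially the same route as the paper's proof: the same witness value $v$ of maximal objective sum, the same threshold $B = \floor{\mu/(n+1)}$ fed into \Cref{lem:mult-growth2}, the same observation that strict domination of $v$ forces a strictly larger sum (so that case terminates the phase), and the same finish via \Cref{lem:survival-guarantee}, a single improving bit flip with probability at least $\frac{B}{\mu e n}$ per offspring, and \Cref{lem:expected-number-of-iterations-based-on-single-success}. No gaps to report.
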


\begin{proof}
    Let $v \in \R^2$ be an objective value witnessing $\nu(P_{t_0})$, that is, $P$ contains an individual $x$ with $f(x) = v$ and we have $v_1 + v_2 = \nu(P_{t_0})$. As in most other proofs in this work, we first exploit the multiplicative growth of the number of individuals with objective value $v$. Note that now, different from the previous analyses, it may happen that the objective value $v$ is dominated. However, in this case the dominating value witnesses a strictly larger $\nu$-value, so we are already done.

    With $B = \floor{\frac{\mu}{n+1}}$, we obtain from \Cref{lem:mult-growth2} that it takes an expected number of $O(\ceil{\frac{\mu}{\lambda}} \log B) = O(\ceil{\frac{\mu}{\lambda}} \log \frac{\mu}{n+1})$ iterations to have in the population at least $B$ individuals with objective value $v$ or to have at least one individual with objective value strictly dominating~$v$.

    In the latter case, we have increased the $\nu$-value of the population and we are done. Hence assume that the former case holds, that is, there are at least $B$ individuals with objective value $v$ in the population.

    By \Cref{lem:survival-guarantee}, in all future iterations until we have an individual dominating $v$, we have at least $B$ individuals with objective value~$v$. Hence in each such iteration the probability of selecting such an individual and mutating it into one with objective value strictly dominating~$v$ is at least $\frac{B}{\mu} \frac 1n (1-\frac 1n)^{n-1} \ge \frac{B}{\mu n e}$.

    By \Cref{lem:expected-number-of-iterations-based-on-single-success}, the expected number of iterations for such an individual to enter the population is
    \begin{align*}
        1 + \frac{\mu n e}{\lambda B}
        = O\left(\frac{\mu n}{\lambda } \frac{1}{\floor{\frac{\mu}{n+1}}} + 1\right) = O\left(\frac{n^2}{\lambda} + 1\right).
    \end{align*}
    Together with the time of the multiplicative growth phase,  we obtain an expected number of iterations of
    \[        O\left(\ceil*{\frac{\mu}{\lambda}} \log \frac{\mu}{n+1} + \frac{n^2}{\lambda} + 1\right).\qedhere
    \]
\end{proof}

The second ingredient to our main proof is a statement on how long it takes to generate a neighboring Pareto optimal objective value from an existing Pareto optimal value. The arguments for this are essentially the same as in the previous lemma, in fact, easier, since now we go from a fixed value $v$ to a fixed value $w$ and we know that $v$ cannot be lost as it is Pareto optimal. For that reason, we state the result without explicit proof.

\begin{lemma}
    \label{lem:lotz-spreadout}
    Consider the \SPEA with $\mu \ge n+1$ optimizing the \lotz benchmark with $n \ge 3$. Assume that in some iteration $t_0 \in \Z_{\geq 0}$, the population $P_{t_0}$ contains an individual with Pareto optimal objective value $v$. Let $w$ be a neighboring Pareto optimal solution value, that is, $w = (v_1+1,v_2-1)$ or $w=(v_1-1,v_2+1)$ provided this lies in $[0..n]^2$.

    Let $T = \inf \{t \in \Z_{\ge t_0} \mid w \in f(P_t)\}$. Then
    \[
        \E[T - t_0] = O(\ceil{\tfrac{\mu}{\lambda}} \log \tfrac{\mu}{n+1} +  \tfrac{n^2}{\lambda} + 1).
    \]
\end{lemma}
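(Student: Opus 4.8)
The plan is to reproduce the two-phase argument of \Cref{lem:lotz-increase}, while exploiting that the present situation is in fact strictly simpler. The key observation is that~$v$ is Pareto optimal, so the individual in $P_{t_0}$ carrying it is globally non-dominated; hence $v \in F_{t_0}$, and~$v$ can never become strictly dominated in any later iteration. Consequently the multiplicative-growth process for the number of copies of~$v$ is never reset, and the second (``dominated'') stopping alternative built into \Cref{lem:mult-growth2} simply never triggers. The proof therefore splits into phase~(i), growing the number of individuals with objective value~$v$ to a useful multiplicity, and phase~(ii), waiting for one of them to mutate into~$w$.

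For phase~(i) I would set $B = \floor{\frac{\mu}{n+1}}$, which is an admissible target for \Cref{lem:mult-growth2}: since $\mu \ge n+1$ we have $B \ge 1$, and the largest incomparable set for \lotz has size $\overline M = n+1$, so $B \le \floor{\mu/\overline M}$. Applying \Cref{lem:mult-growth2} to $X_t = \abs{\{x \in R_{t_0+t} \mid f(x) = v\}}$ then gives that after an expected $O(\ceil{\frac{\mu}{\lambda}} \log B) = O(\ceil{\frac{\mu}{\lambda}} \log \frac{\mu}{n+1})$ iterations the population contains at least~$B$ individuals with objective value~$v$. Because $v$ is Pareto optimal, this is the genuine event realizing the infimum in that lemma (the alternative of finding a strictly dominating value is impossible).

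For phase~(ii) I would invoke \Cref{lem:survival-guarantee}, which guarantees that once~$B$ copies of~$v$ are present they persist, at least~$B$ of them, in every subsequent parent population, again using that~$v$ is never dominated and $B \le \floor{\mu/\overline M}$. By the one-to-one correspondence between the Pareto set and Pareto front of \lotz, each such individual is exactly $1^{v_1}0^{v_2}$ with $v_1+v_2=n$, and the neighbor~$w$ corresponds to $1^{v_1\pm1}0^{v_2\mp1}$, which is reached by flipping a single prescribed bit. Hence each generated offspring equals~$w$ with probability at least $\frac{B}{\mu}\,\frac{1}{n}(1-\frac{1}{n})^{n-1} \ge \frac{B}{\mu n e}$, and \Cref{lem:expected-number-of-iterations-based-on-single-success} bounds the expected number of iterations to produce~$w$ by $1 + \frac{\mu n e}{\lambda B} = O(\frac{n^2}{\lambda} + 1)$, using $B = \floor{\mu/(n+1)} \ge \frac{\mu}{2(n+1)}$ exactly as in \Cref{lem:lotz-increase}. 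Adding the two phases yields the claimed $O(\ceil{\frac{\mu}{\lambda}} \log \frac{\mu}{n+1} + \frac{n^2}{\lambda} + 1)$.

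I do not expect any genuine obstacle here; the two points that deserve care are precisely the two simplifications relative to \Cref{lem:lotz-increase}. First, Pareto optimality of~$v$ removes every worry about the growth process being interrupted or the copies being discarded by selection, so \Cref{lem:mult-growth2,lem:survival-guarantee} apply in their cleanest form. Second, the rigid structure of \lotz Pareto optima turns ``generate a neighboring objective value'' into a single designated bit flip rather than an arbitrary dominating mutation, making the success-probability estimate immediate. With these two observations in place, all the estimates coincide with ones already carried out, which is why the statement can be given without an explicit separate proof.
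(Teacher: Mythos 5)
Your proposal is correct and takes essentially the same route the paper intends: the paper omits the proof precisely because, as you reconstruct, it is the argument of \Cref{lem:lotz-increase} simplified by Pareto optimality of~$v$ --- growth to $B=\floor{\frac{\mu}{n+1}}$ copies via \Cref{lem:mult-growth2} with the domination alternative vacuous, persistence of these copies via \Cref{lem:survival-guarantee}, and generation of~$w$ from $1^{v_1}0^{v_2}$ by one prescribed bit flip, bounded via \Cref{lem:expected-number-of-iterations-based-on-single-success}. All your estimates (including $\floor{\frac{\mu}{n+1}} \ge \frac{\mu}{2(n+1)}$ and the success probability $\frac{B}{\mu n e}$ per offspring) check out and coincide with those in \Cref{lem:lotz-increase}.
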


Using the two lemmas above, we obtain \Cref{thm:lotz-runtime} simply by adding up suitable waiting times.

\begin{proof}[Proof of \Cref{thm:lotz-runtime}]
    As in the proof of Theorem~\ref{thm:omm-runtime-bound}, we can assume that $n \ge 3$. By Lemma~\ref{lem:lotz-increase}, it takes an expected number of $O(\ceil{\tfrac{\mu}{\lambda}} \log \tfrac{\mu}{n+1} +  \tfrac{n^2}{\lambda} + 1)$ iterations to increase the maximum $f_1+f_2$ value in the population. Hence after an expected number of $O(n \ceil{\tfrac{\mu}{\lambda}} \log \tfrac{\mu}{n+1} +  \tfrac{n^3}{\lambda} + n)$ iterations, we have found a solution $x$ with $f_1(x)+f_2(x)=n$, that is, on the Pareto front. From this point on, we exploit Lemma~\ref{lem:lotz-spreadout}, telling us that we can find any neighbor of a Pareto optimal solution in the population in $O(\ceil{\tfrac{\mu}{\lambda}} \log \tfrac{\mu}{n+1} +  \tfrac{n^2}{\lambda} + 1)$ iterations. This needs to be done at most $n$ times, yielding a total time of $O(n \ceil{\tfrac{\mu}{\lambda}} \log \tfrac{\mu}{n+1} +  \tfrac{n^3}{\lambda} + n)$ iterations for completing the Pareto front.
\end{proof}

\section{Conclusion}

We conducted a rigorous runtime analysis of the \SPEA on the three main benchmarks used in the runtime analysis community. Our results show that, in particular in comparison with the well-studied \NSGA, the performance of the standard \SPEA is less sensitive to the choice of population size.
Combined with our more general main arguments, these findings suggest that the \SPEA can serve as a robust choice in practical applications where optimal population size is unknown.

We note that the selection in the \SPEA based on the $\sigma$-criterion is computationally more costly than the crowding distance employed by \NSGA.
However, the crowding distance combined with uniform tie-breaking can fail already for easy problems with at least three objectives \cite{ZhengD22arxivmany}, and the alternative proposed by \citet{DoerrIK25} adds complexity.
A theoretical comparison of such added costs is an interesting step for future work.

Last, in the case of a population size smaller than the Pareto front size, our results do not apply.
However, \citet{AlghouassDKL25} proved that the \SPEA converges within reasonable time to a uniform spread across the Pareto front in the \oneminmax benchmark, whereas the \NSGA does not.
An interesting next step is to see whether our analysis of the population dynamics also results in improved \SPEA convergence times for this setting.

\section*{Acknowledgments}
This research benefited from the support of the FMJH Program PGMO. This work has profited from many scientific discussions at the Dagstuhl Seminars 23361 ``Multiobjective Optimization on a Budget'' and 24271 ``Theory of Randomized Optimization Heuristics''.

}

\bibliography{ich_master.bib, alles_ea_master.bib, rest.bib}

\end{document}